\newcommand{\Keywords}[1]{\par\noindent
{\small{\em Keywords\/}: #1}}
\newtheorem{theorem}{Theorem}[section]
\newtheorem{lemma}[theorem]{Lemma}
\newtheorem{proof}[theorem]{Proof}
\begin{document}

\title{A Permutation Approach to Testing Interactions in Many Dimensions}

\author{Noah Simon \thanks{Department of Statistics, Stanford University, \texttt{nsimon@stanford.edu}} \and Rob Tibshirani\thanks{Department of Statistics, Stanford University, Department of Health Research and Policy, Stanford University}}

\maketitle

\begin{abstract}
To date, testing interactions in high dimensions has been a challenging task. Existing methods often have issues with sensitivity to modeling assumptions and heavily asymptotic nominal p-values. To help alleviate these issues, we propose a permutation-based method for testing marginal interactions with a binary response. Our method searches for pairwise correlations which differ between classes. In this manuscript, we compare our method on real and simulated data to the standard approach of running many pairwise logistic models. On simulated data our method finds more significant interactions at a lower false discovery rate (especially in the presence of main effects). On real genomic data, although there is no gold standard, our method finds apparent signal and tells a believable story, while logistic regression does not. We also give asymptotic consistency results under not too restrictive assumptions.\\

\Keywords{correlation, high dimensional, logistic regression, false discovery rate}
\end{abstract}

\section{Introduction}\label{sec:intro}
In many areas of modern science, massive amounts of data are generated.
In the biomedical sciences,  examples arise in
genomics, proteomics,  and flow cytometry.  New high-throughput experiments allow
researchers to look at the dynamics of very rich systems.  With these vast increases in data accumulation,
scientists have found classical statistical techniques in need of improvement, and classical notions of error control (type 1 error)
overwhelmed.

Consider the following two class situation: our data consists of $n$
observations, each observation with a known class label of 1 or 2,
 with $p$ covariates measured per observation. Let $y$ denote
the $n$-vector corresponding to class (with $n_1$ observations in
class $1$ and $n_2$ in class $2$),
and $X$, the $n\times p$ matrix of covariates. We often assume each row
of $X$ is independently normally distributed with some class specific mean
$\mu_{y(i)}\in\mathbb{R}^p$ and covariance $\Sigma_{y(i)}$ (for
instance in quadratic discriminant analysis). Here, we are interested in differences between classes. A
common example is gene expression data on healthy and diseased
patients: the covariates are the genes ($p\sim 20,000$), the
observations are patients ($n\sim 100$) belonging to either
the healthy or diseased class. Here, one might look at differences
between classes to develop a genetic prognostic test of the disease,
or to better understand its underlying biology.
Recent high dimensional procedures have
focused on detecting differences between
$\mu_1$ and $\mu_2$ by considering them one
covariate at a time.

In this paper we consider the more difficult problem of
testing marginal interactions. In a fashion similar to the approaches used in large scale
testing of main effects (see e.g \citet{DSB2003}, \citet{TTC01} and
\citet{efron2010ebayes}), we do this on a pair by pair basis.

The standard approach for this problem has been to run many bivariate logistic regressions and then conduct a post-hoc analysis on the nominal p-values. \citet{buzkova2011} has a nice summary of the subtle  issues that arise in testing
for just a single interaction in a regression framework.
In  particular, a permutation approach cannot be simply applied because
it tests the null hypothesis of both no interaction and no main effects at the same time. In the high-dimensional setting with FDR estimates, these issues are compounded.

The logistic regression based methods are all derived from what we call a {\em forward model}, that is, a  model 
for the conditional distribution of $Y|X$. In contrast, a {\em backward model} (discussed below) is a  model 
for the conditional distribution of $X|Y$. We propose a method, based on a backwards model, to approach this same problem. By using this backwards framework we avoid many of the pitfalls of standard approaches: we have a less model-based method, we attack a potentially more scientifically interesting quantity, and we can use a permutation null for FDR estimates. Our approach is unfortunately only for binary response --- the backwards model is more difficult to work with for continuous $y$.

In this paper we develop our method, and show its efficacy as compared to straightforward logistic regression on real and simulated data. We explain how to deal with nuisance variables, and give insight into our permutation-based estimates of FDR. We also give some asymptotic consistency results. 

\section{Existing Methods}\label{sec:exist}

We begin by going more in-depth on the standard approach and its issues. In general one might like to specify a generative logistic model for the data (a forward model) of the form
\begin{equation}\label{eq:inter}
\operatorname{logit}\left[\operatorname{P}(y_i = 1 | X_{i,\cdot})\right] = \beta_0 + \sum_{j=1}^p \beta_j X_{i,j} + \sum_{k< j}\gamma_{j,k}
X_{i,j} X_{i,k}
\end{equation}
where $X_{i,\cdot}$ is the $i$-th row of $X$, and test if the $\gamma_{j,k}$ are nonzero in this model. 
Here $i$ indexes the observations and $j,k$ index the  predictors.
However, because it is a joint rather than a marginal model, this does not easily allow us to test individual pairs of covariates
separately from the others. Furthermore in the scenario with $n < p(p+1)/2$, the MLE for this model is not well defined (one can always get perfect separation) and non-MLE estimates are very difficult to use for testing.

Alternatively, for each pair $(X_{i,j}, X_{i,k})$  one might assume a generative logistic model of the form
\begin{equation}\label{eq:bivLog}
\operatorname{logit}\left[\operatorname{P}(y_i = 1 | X_{i,j},
  X_{i,k})\right] = \beta_0 + \beta_j X_{i,j} +  \beta_k X_{i,k} + \gamma_{j,k}X_{i,j} X_{i,k}
\end{equation}
and estimate or test $\gamma_{j,k}$ using the MLE $\hat\gamma_{j,k}$.

A standard approach to this problem in the past has been to fit pairwise logistic models~\eqref{eq:bivLog} independently for every pair $(j,k)$, and then use standard tools (ie. asymptotic normality of the MLE) to calculate approximate $P$-values. Once the $p(p-1)/2$ $p$-values are calculated, the approach of \citet{BH95} or some other standard procedure can be used to estimate/control FDR.

This approach has a number of problems. First of all, while the approach is very model-based, one cannot even ensure that all of the bivariate logistic models are consistent with one another (i.e. that there is a multivariate model with the given marginals). In particular, model misspecification will often cause over-dispersion resulting in anti-conservative FDR estimates. Also, if the true model contained quadratic terms (which we do not have in our model) then for correlated pairs of features this approach will compensate by trying to add false interactions. Even if we did believe the model, the p-values are only approximate, and this approximation grows worse as we move into the tails. 

One might hope to avoid some of these issues by using permutation p-values, however, as shown in \citet{buzkova2011} permutation methods are incongruous with this approach --- they test the joint null hypothesis of no main effect or interaction, which is not the hypothesis of interest. This difficulty is also discussed in \citet{pesarin2001}. In an attempt to resolve this, \citet{kooperberg2008} regress out the main effects before permuting the residuals. This is a nice adjustment, but is still heavily model-based.

To deal with these issues, we take a step back and use a different generative model. Our generative model has an equivalent logistic model and this correspondence allows us to sidestep many of the issues with the standard logistic approach.

\subsection{Forward vs Backward Model}\label{sec:forVsback}
We propose to begin with a ``backward'' generative model
--- as mentioned in Section~\ref{sec:intro}, we assume that observations are Gaussian in each class $\left(x_i|y_i\right) \sim
N(\mu_{y(i)}, \Sigma_{y(i)})$ with a class specific mean and covariance matrix. We argue that the most natural test of interaction is a test of equality of correlations between groups.

Toward this end, let us apply Bayes theorem to our backwards generative model, to obtain
\begin{align*}
\operatorname{P}(y = 1 | x) &= \frac{\pi_1 \operatorname{exp}\left(l_1\right)}{\pi_2  \operatorname{exp}\left(l_2\right) + \pi_1
   \operatorname{exp}\left(l_1\right)}\\
&= \frac{ \operatorname{exp}\left[\operatorname{log}(\pi_1/\pi_2) +
    l_1 - l_2\right]}{1 +  \operatorname{exp}\left[\operatorname{log}(\pi_1/\pi_2) +
    l_1 - l_2\right]}
\end{align*}
where
\[
l_m = -p\operatorname{log}\left(2\pi\right)/2 - \operatorname{logdet}\left(\Sigma_m\right)/2 - (x-\mu_m)^{\top}\Sigma_m^{-1}(x-\mu_m)/2
\]
and $\pi_m$ is the overall prevalence of class $m$. We can simplify this to
\begin{align*}
\operatorname{logit} \left(P\right) &= \operatorname{logdet}\left(\Sigma_2\right)/2 - \operatorname{logdet}\left(\Sigma_1\right)/2 + \operatorname{log}(\pi_1/\pi_2) + \mu_2^{\top}\Sigma_2^{-1}\mu_2/2\\
&- \mu_1^{\top}\Sigma_1^{-1}\mu_1/2 + \left(\Sigma_1^{-1}\mu_1 - \Sigma_2^{-1}\mu_2\right)^{\top} x + x^{\top}\left(\Sigma_2^{-1} - \Sigma_1^{-1}\right) x/2.
\end{align*}
This is just a logistic model with interactions and quadratic terms, and in the form of \eqref{eq:inter} (with additional quadratic terms) we have
\begin{align*}
\beta_0 &= \operatorname{logdet}\left(\Sigma_2\right)/2 - \operatorname{logdet}\left(\Sigma_1\right)/2 + \operatorname{log}(\pi_1/\pi_2)\\
 &+ \mu_2^{\top}\Sigma_2^{-1}\mu_2/2 - \mu_1^{\top}\Sigma_1^{-1}\mu_1/2\\
\beta_{j} &= \left(\Sigma_1^{-1}\mu_1 - \Sigma_2^{-1}\mu_2\right)_j\\
\gamma_{j,k} &= \left(\Sigma_2^{-1} - \Sigma_1^{-1}\right)_{j,k}.
\end{align*}
From here we can see that traditional logistic regression interactions in the full model correspond to nonzero off-diagonal elements of $\Sigma_2^{-1} - \Sigma_1^{-1}$. Testing for non-zero elements here is not particularly satisfying for a number of reasons. Because coordinate estimates are so intertwined, there is no simple way to marginally test for non-zero elements in $\Sigma_2^{-1} - \Sigma_1^{-1}$ --- in particular there is no straightforward permutation test. Also, for $n<p$ the MLEs for the precision matrices are not well defined.

As in the logistic model~\eqref{eq:bivLog} we may condition on only a pair of covariates $j$ and $k$ in our backwards model. Using Bayes theorem as above, our equivalent bivariate forward model is
\begin{align*}
\operatorname{P}(y = 1 |\, \tilde{x} = \left(x_j, x_k\right)^{\top}) &= \operatorname{log}(\pi_1/\pi_2) + \tilde{\mu}_2^{\top}\tilde{\Sigma}_2^{-1}\tilde{\mu}_2/2 -
\tilde{\mu}_1^{\top}\tilde{\Sigma}_1^{-1}\tilde{\mu}_1/2\\
& + \left(\tilde{\Sigma}_1^{-1}\tilde{\mu}_1 - \tilde{\Sigma}_2^{-1}\tilde{\mu}_2\right)^{\top} \tilde{x} + \tilde{x}^{\top}\left(\tilde{\Sigma}_2^{-1} - \tilde{\Sigma}_1^{-1}\right) \tilde{x}/2
\end{align*}
where $\tilde{\mu}_m$ and $\tilde{\Sigma}_m$ are the mean vector and covariance matrix in class $m$ for only $X_j$ and $X_k$. Hence the backwards model has an equivalent logistic model similar to ~\eqref{eq:bivLog} but with quadratic terms included as well. One should note that the main effect and interaction coefficients in this marginal model \emph{do not} match those from the full model (i.e. the marginal interactions and conditional interactions are different).

Our usual marginal logistic interaction between covariates $j$ and $k$ corresponds to a nonzero off-diagonal entry in $\tilde{\Sigma}_2^{-1} - \tilde{\Sigma}_1^{-1}$. Simple algebra gives
\[
\tilde{\Sigma}^{-1}_{m(1,2)} = -\left(\frac{R_{m(j,k)}}{\sigma_{m(j)}\sigma_{m(k)}\left(1-R_{m(j,k)}^2\right)}\right)
\]
where $R_{m(j,k)}$ is the correlation between features $j$ and $k$ in class $m$, and $\sigma_{m(j)}$ is the standard deviation of variable $j$ in class $m$.

Thus, if we were to test for ``logistic interactions'' in our pairwise backwards model, we would be testing:
\[
\frac{R_{1(j,k)}}{\sigma_{1(j)}\sigma_{1(k)}\left(1-R_{1(j,k)}^2\right)} = \frac{R_{2(j,k)}}{\sigma_{2(j)}\sigma_{2(k)}\left(1-R_{2(j,k)}^2\right)}
\]
Now, if $\sigma_{1(j)} = \sigma_{2(j)}$, and $\sigma_{1(k)} = \sigma_{2(k)}$, then this is equivalent to testing if $R_{1(j,k)} = R_{2(j,k)}$. If not, then a number of unsatisfying things may happen. For example if the variance of a single variable changes between classes, then, even if its correlation with other variables remains the same, it still has an ``interaction'' with all variables with which it is correlated. This change of variance is a characteristic of a single variable, and it seems scientifically misleading to call this as an ``interaction'' between a pair of features.

Toward this end, we consider a restricted set of null hypotheses --- rather than testing for an interaction between each pair of features $(j,k)$, we test the null $R_{1(j,k)} = R_{2(j,k)}$. Not all logistic interactions will have $R_{1(j,k)} \neq R_{2(j,k)}$, but we believe this is the property which makes an interaction physically/scientifically interesting.

To summarize, there are a number of issues in the forward model which are alleviated through the use of the backwards model:
\begin{itemize}
\item The marginal forward models are not necessarily consistent (one cannot always find a ``full forward model'' with the given marginals).
\item Omitted quadratic terms may be mistaken for interactions between correlated covariates.
\item Interesting interactions are only those for which $R_{1(j,k)} \neq R_{2(j,k)}$.
\item $P$-values are approximate and based on parametric assumptions.
\end{itemize}

\section{Proposal}\label{sec:method}
We begin with the generative model described in
Section~\ref{sec:forVsback}--- we assume observations are Gaussian in each class $\left(x_i|y_i\right) \sim
N(\mu_{y(i)}, \Sigma_{y(i)})$ with a class specific mean and covariance matrix. As argued above, we test for interactions by testing
\[
\mathbf{H}_{j,k}:\,R_{1(j,k)} = R_{2(j,k)}
\]
for each $j<k$, where again, $R_{m(j,k)}$ denotes the $(j,k)$-th entry of the correlation matrix for class $m$.

If we were only testing one pair of covariates $(j,k)$, a
straightforward approach would be to compare the sample correlation coefficients $\hat{R}_{1(j,k)}$ to
$\hat{R}_{2(j,k)}$. In general, because the variance of $\hat{R}_{m(j,k)}$ is dependent on
$R_{m(j,k)}$, it is better to make inference on a Fisher
transformed version of $\hat{R}_{m(j,k)}$:
\[
U_{m(j,k)} = \operatorname{arctanh}\left(\hat{R}_{m(j,k)}\right)
\dot{\sim}
N\left(\operatorname{arctanh}\left(R_{m(j,k)}\right),\frac{1}{n_m-3}\right)
\]
This is a variance stabilizing transformation. Now, to compare the two
correlations we consider the statistic
\begin{equation}\label{eq:stat}
T_{(j,k)} = U_{1(j,k)} - U_{2(j,k)} \dot{\sim}
N\left(\operatorname{arctanh}\left(R_{1(j,k)}\right) - \operatorname{arctanh}\left(R_{2(j,k)}\right),\frac{1}{n_1-3} + \frac{1}{n_2-3}\right)
\end{equation}
Under the null hypothesis: $R_{1(j,k)} = R_{2(j,k)}$, this statistic is
distributed $N\left(0,\frac{1}{n_1-3} + \frac{1}{n_2-3}\right)$. To test if the
correlations are equal we need only compare our statistic $T_{(j,k)}$
to its null distribution and find a $p$-value. While this approach works well for single tests, because we are in the high dimensional setting we use a different approach which doesn't rely on the statistic's asymptotic normal distribution.

We are interested in testing differences between two large correlation matrices in higher dimensional spaces. We again calculate the differences of our transformed sample correlations
--- we now calculate $p(p-1)/2$ statistics; one for each pair $(j,k)$
with $j<k$. However to assess significance we no longer just compare each statistic
to the theoretical null distribution and find a p-value. Instead we directly estimate false discovery rates (FDR):  we choose some
threshold for our statistics, $t$, and reject (/call significant)
all $(j,k)$ with $|T_{(j,k)}| > t$. Clearly, not all marginal interactions called significant
in this way will be truly non-null and it is important to estimate the FDR
of the procedure for this cutoff, that is
\[
\operatorname{FDR} = E\left[\frac{\textrm{\# false rejections}}{\textrm{\# total rejections}}\right],
\]
where `\#' is short-hand for ``number of''. It is standard to approximate this quantity by 
\begin{equation}\label{eq:FDR}
\frac{\hat{E}[\textrm{\# false rejections}]}{\textrm{\# total rejections}}.
\end{equation}
The denominator is just the number of $|T_{(j,k)}| > t$ (which we know). If we knew
which hypotheses were null and their distributions then one could find
the numerator by
\begin{equation}\label{eq:numer}
E[\textrm{\# false rejections}] = \sum_{(j,k) \textrm{ null}} \operatorname{P}(|T_{(j,k)}| > t)
\end{equation}
Clearly we don't know which hypotheses are null.  To estimate
\eqref{eq:numer} we propose the following permutation approach.

We first center and scale our variables within class: for each observation we subtract off the class mean for each feature and divide by that feature's within-class standard deviation --- let $\tilde{X}$ denote this standardized matrix. This standardization doesn't change our original statistics, $T_{j,k}$ (the
correlation calculated from $X$ and $\tilde{X}$ are identical), but
is important for our null distribution. Now, let $\pi$
be some random permutation of $\{1,\ldots,n\}$. Thus, $\pi(y)$ is a
random permutation of the class memberships of the standardized variables (we keep the standardization from before the permutation). With these new class
labels we calculate a new set of $p(p-1)/2$ statistics,
$\{T^{*a}_{(j,k)}\}_{j<k}$. We can permute our data $A$ times, and
gather a large collection of these null statistics, ($Ap(p-1)/2$) of them.  To estimate
$E[\textrm{\# false rejections}]$, we take the average number
of these statistics that lie above our cutoff
\[
\hat{E}[\textrm{\# false rejections}] = \frac{1}{A} \sum_{a=1}^A \#
\{|T^{*a}_{(j,k)}| > t\}
\]
Often, one is interested in the FDR of the $l$ most significant
interactions.  In this case the cutoff, $t$, is chosen to be the
absolute value of the $l$-th most significant statistic, denoted
$T(l)$. We refer to this procedure as Testing Marginal Interactions through correlation (TMIcor) and summarize it below.

\medskip
\begin{center}
{\bf TMIcor: Algorithm for Testing Marginal Interactions}\label{alg:1}
\end{center}
\begin{enumerate}
\item Mean center and scale $X$ within each group.
\item Calculate the  feature correlation matrices $\hat R_1$ and $\hat R_2$ within each class.
\item Fisher transform the entries (for $j<k$): $U_{m(j,k)} = \operatorname{arctanh}\left(\hat{R}_{m(j,k)}\right)$\\
and take their coordinate-wise differences: $T_{(j,k)} = U_{1(j,k)} - U_{2(j,k)}$
\item for $a=1,\ldots\, ,A$ execute the following
\begin{enumerate}
\item Randomly permute class labels of the standardized variables.
\item Using the new class labels, reapply steps 2-4 to calculate new
  statistics $\{T^{*a}_{(j,k)}\}_{j<k}$
\end{enumerate}
\item Estimate FDR for any $l$ most significant interactions by
\[
\widehat{{\rm FDR}} = \frac{\left(\frac{1}{A}\right) \sum_{a=1}^A \#\{|T^{*a}_{(j,k)}| > T(l)\}}{l}
\]
\end{enumerate}
Using this approach, one gets a ranking of pairs of features and an FDR estimate for every position in the ranking. Furthermore, rather than testing for interactions between all pairs
of variables, one may instead test for interactions between variables in one set
(such as genes) and variables in another (such as environmental variables). To do this, one would only need restrict the statistics considered in steps $3$, $4b$ and $5$.

Standardizing in step $(1)$ before permuting may seem strange, but in this case is necessary. If we do not standardize first, we are testing the joint null that the means, variances and correlations are the same between classes. This is precisely what we moved to the backward model to avoid --- by standardizing we avoid permuting the ``main effects''. We discuss this permutation-based estimate of FDR in more depth in appendix A.

\section{Comparisons}\label{sec:comparisons}

In this section we apply TMIcor and the standard logistic approach to real and simulated data. On simulated data we see that in some scenarios (in particular with main effects) the usual approach has serious power issues as compared to TMIcor. Similarly on our real dataset we see that the usual approach does a poor job of finding interesting interactions, while TMIcor does well.

\subsection{Simulated Data}

We attempt to simulate a simplified version of biological data. In general, groups of proteins or genes act in concert based on biological processes. We model this with a block diagonal correlation matrix --- each block of proteins/genes is equi-correlated. This can be interpreted as a latent factor model --- all the proteins in a single block are highly correlated with the same latent variable (maybe some unmeasured cytokine), and conditional on this latent variable, the proteins are all uncorrelated. In our simulations we use $10$ blocks, each with $10$ proteins ($100$ total proteins). We simulate the proteins for our healthy controls as jointly Gaussian with $0$ mean and covariance matrix
\[
\Sigma_1 = \begin{pmatrix} 
R_1 & 0 & \cdots & 0\\
0 & R_2 & \cdots & 0\\
\vdots & \vdots & \vdots & \vdots\\
0 & \cdots & 0 & R_{10}
\end{pmatrix}
\]
where each $R_i$ is a $10\times 10$ matrix with $1$s along the diagonal, and a fixed $\rho_i>0$ for all off-diagonal entries. Now, for our diseased patients we again use mean $0$ proteins, but change our covariance matrix to
\[
\Sigma_2 = \begin{pmatrix} 
\tilde{R}_1 & 0 & \cdots & 0\\
0 & R_2 & \cdots & 0\\
\vdots & \vdots & \vdots & \vdots\\
0 & \cdots & 0 & R_{10}
\end{pmatrix}
\]
where $\tilde{R}_1$ has $1$s on the diagonal and $\tilde{\rho}_1$ for all off-diagonal entries (with $0\leq \tilde{\rho}_1 \neq \rho_1$). This correlation structure would be indicative of a mutation in the cytokine for the first group causing a change in the association between that signaling protein and the rest of the group.

Within each class (diseased and healthy) we simulated $250$ patients and applied TMIcor and the usual logistic approach. We averaged the true and estimated false discovery rates of these methods over $10$ trials. As we can see from Figure~\ref{fig:1} TMIcor outperforms the logistic approach. This difference is particularly pronounced in the second plot of Figure~\ref{fig:1}. In this plot, because the correlations are large but different in both groups ($\rho_1 = 0.3$, $\tilde{\rho}_1 = 0.6$), there are some moderate quadratic effects in the true model --- this induces a bias in the logistic approach and its FDR suffers. In contrast, these quadratic effects are not problematic in the backward framework.

  \begin{figure}[t!]
  \centerline{
    \mbox{\includegraphics[width=2.85in]{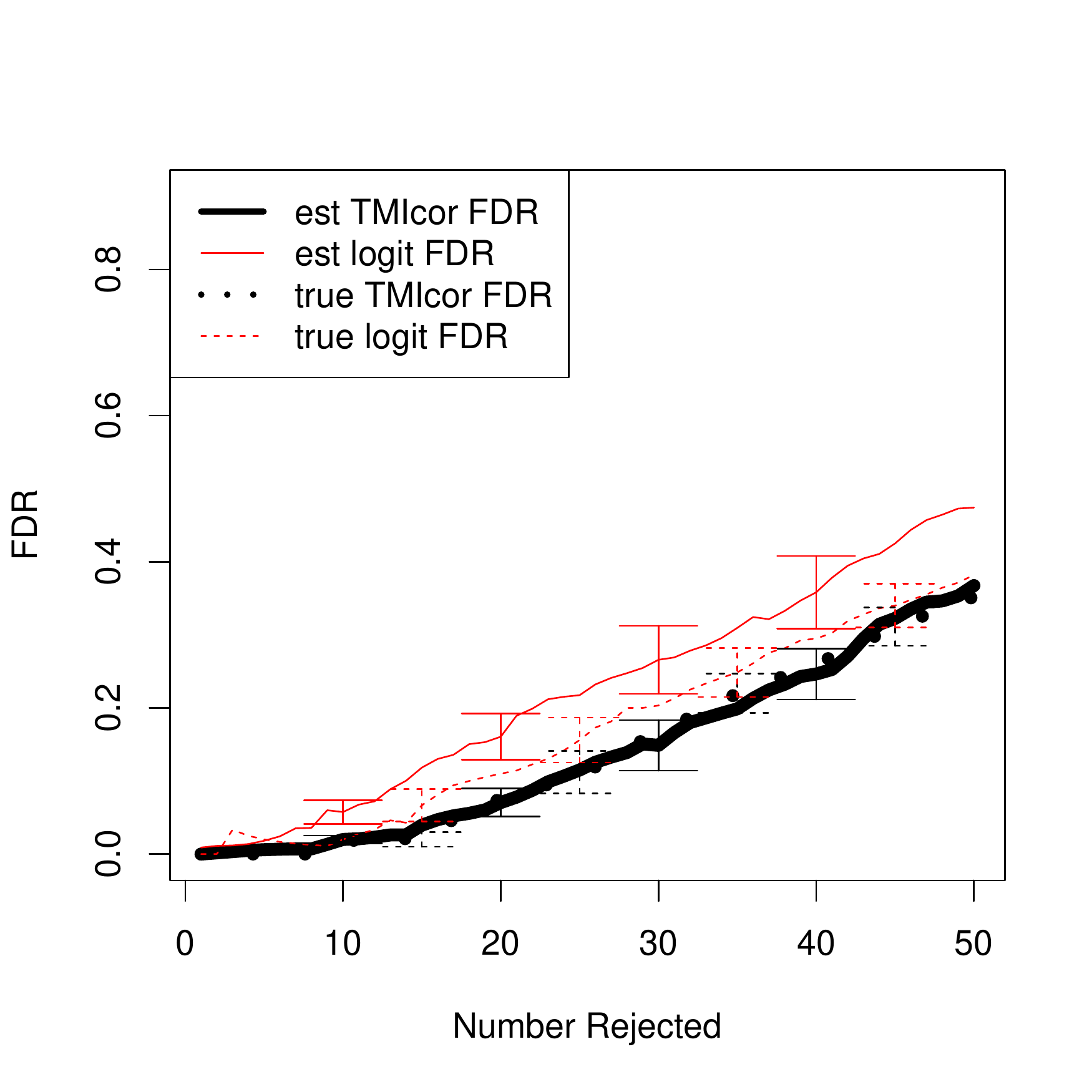}}
    \mbox{\includegraphics[width=2.85in]{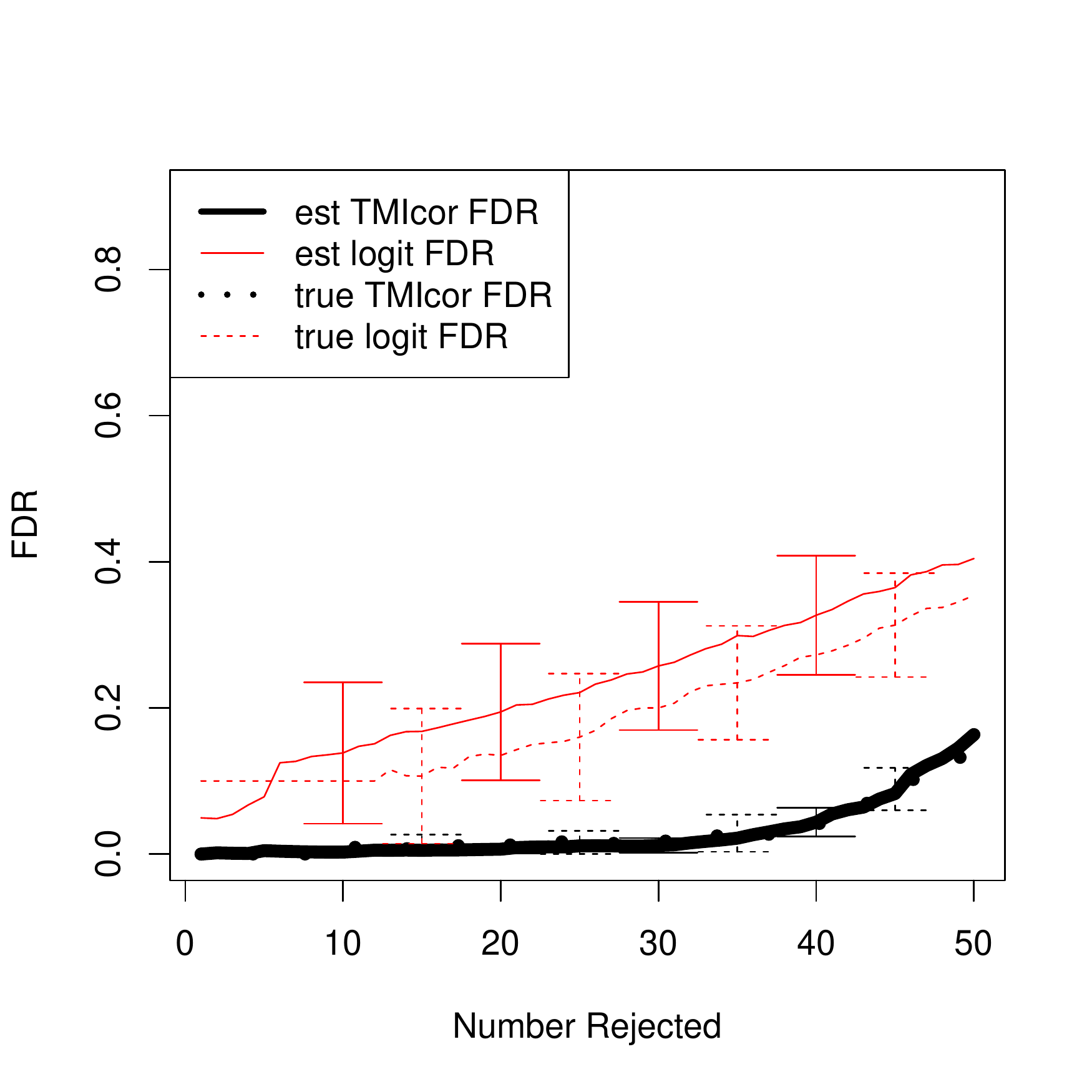}}
  }
  \caption{Plots of estimated and true FDR for TMIcor and logistic regression averaged over $10$ trials. Error bars contain the mean value $\pm$ 1 se of the mean. For controls, $\rho_i = 0.3$ for all $i$. On the left $\tilde{\rho}_1 = 0$, while on the right $\tilde{\rho}_1 = 0.6$. There is no main effect in either panel.}
  \label{fig:1}
  \end{figure}

We also consider a second set of simulations. This set used $\rho_i = 0.3$ for all $i$ and $\tilde{\rho}_1 = 0$. However, instead of mean $0$ in both classes, we set the mean for all proteins in block 1 for diseased patients to be some $\tilde{\mu}_1$ ($> 0$). The results are plotted in Figure~\ref{fig:2}. This mean shift had no effect on TMIcor (the procedure is meanshift invariant), but as the mean difference grows, it becomes increasingly difficult for the logistic regression to find any interactions. This issue is especially important as, biologically, one might expect that genes with main effects to be more likely to have true marginal interactions (and these interactions may also be more scientifically interesting).

  \begin{figure}[t!]
  \centerline{
    \mbox{\includegraphics[width=2.85in]{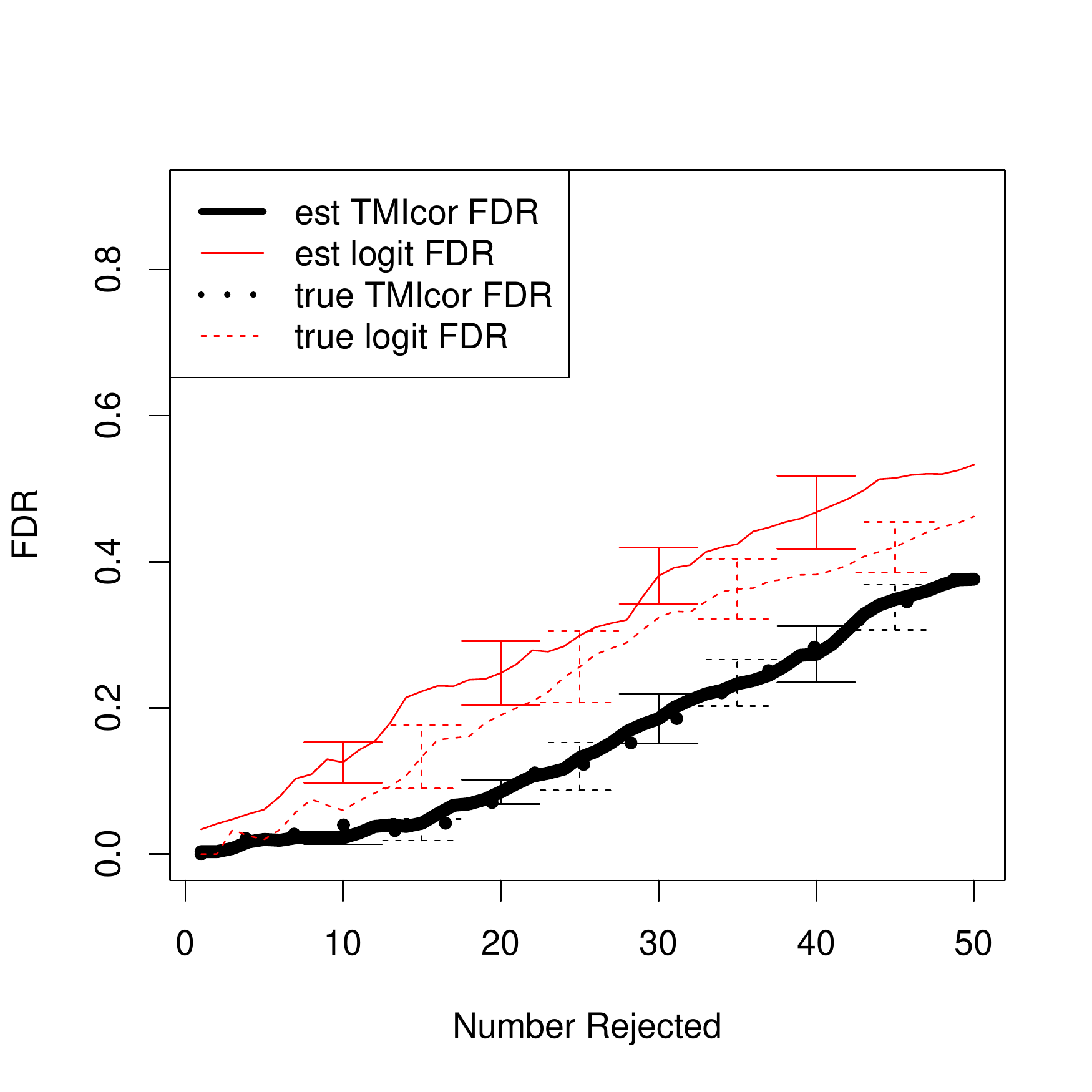}}
    \mbox{\includegraphics[width=2.85in]{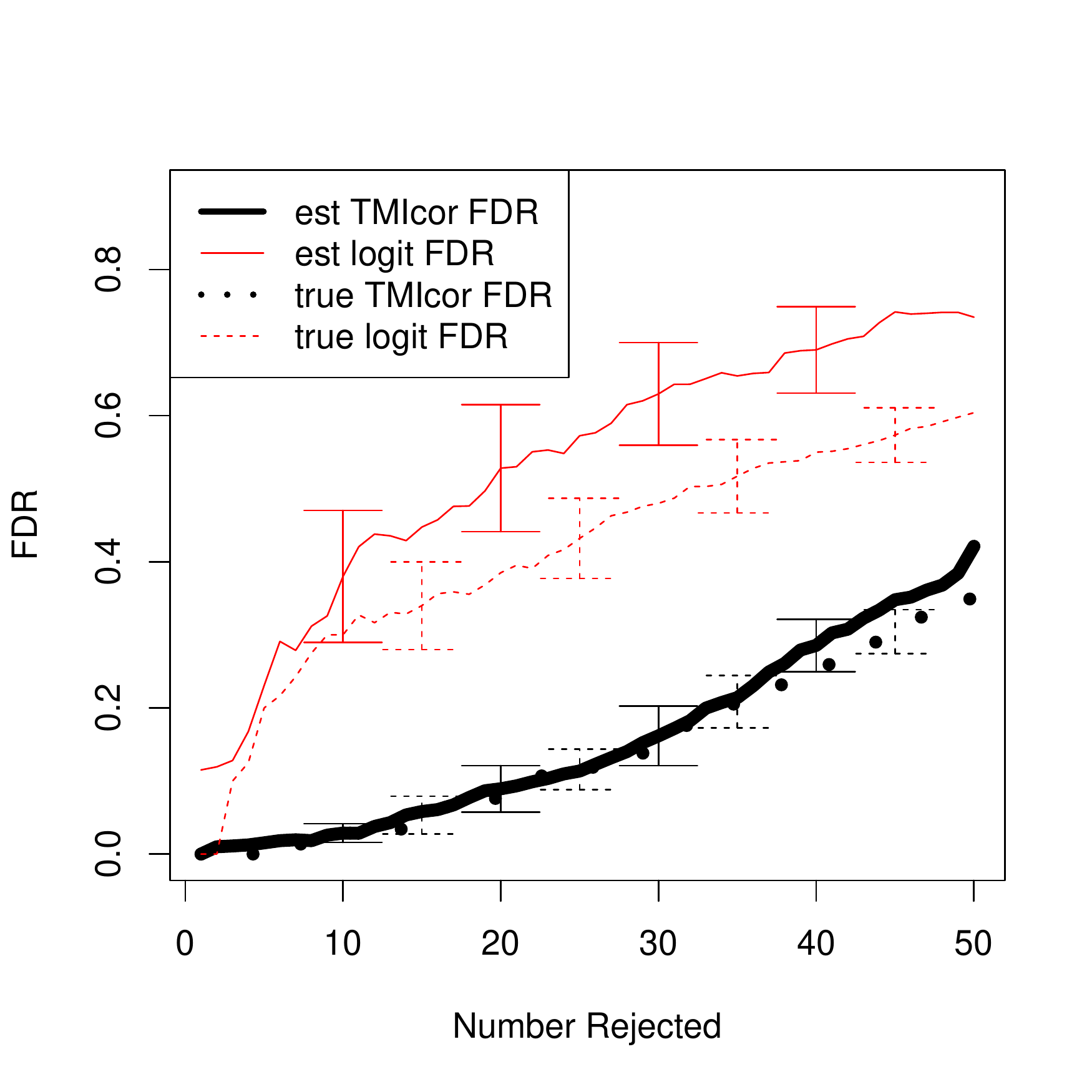}}
  }
  \caption{Plots of estimated and true FDR for TMIcor and logistic regression averaged over $10$ trials. Error bars contain the mean value $\pm$ 1 se of the mean. For both plots $\tilde{\rho}_1 = 0$ and $\rho_i = 0.3$ for all $i$. Both panels have main effects --- on the left $\tilde{\mu}_1 - \mu_1=0.5$, while on the right $\tilde{\mu}_1 - \mu_1 = 1$.}
  \label{fig:2}
  \end{figure}

While these simulations are not exhaustive, they give an indication of a number of scenarios in which TMIcor significantly outperforms logistic regression. More exhaustive simulations were run and the results mirrored those in this section.

\subsection{Real Data}

We also applied both TMIcor and logistic regression to the colitis gene expression data of \citet{burczynski2006}. In this dataset, there are $127$ total patients, $85$ with colitis ($59$ Crohn's patients + $26$ ulcerative colitis patients) and $42$ healthy controls. We restricted our analysis to the $101$ patients without ulcerative colitis. Each patient had expression data for $22283$ genes run on an Affymetrix U133A microarray. Because chromosomes $5$ and $10$ have been indicated in Crohn's disease, we enriched our dataset by using only the genes on these chromosomes, along with the $NOD2$ and $ATG16L1$ genes (chromosomes as specified by the $C1$ geneset from \citet{subramanian2005}). In total $663$ genes were used. Some of these genes were measured by multiple probesets --- the final expression values used for those genes were the average of all probesets.

From these $663$ genes we have $219,453$ of interactions to consider. Figure~\ref{fig:fdr} shows the estimated FDR curves for the two methods. TMIcor finds many more significant interactions --- at an FDR cutoff of $0.1$, TMIcor finds $2570$ significant interactions, while the logistic approach finds $15$. The significant $15$ from the logistic approach may not even be entirely believeable --- the smallest p-value of the $15$ is roughly $1/219453$, which is what we would expect it to be if all null hypotheses were true. Because the smallest p-value is large, we see that the FDR for logistic regression begins surprisingly high. The FDR subsequently drops because there are a number of p-values near the smallest, however, the significance of these hypotheses is still suspect.

\begin{figure}[!t]
  \centerline{
    \includegraphics[width=3in]{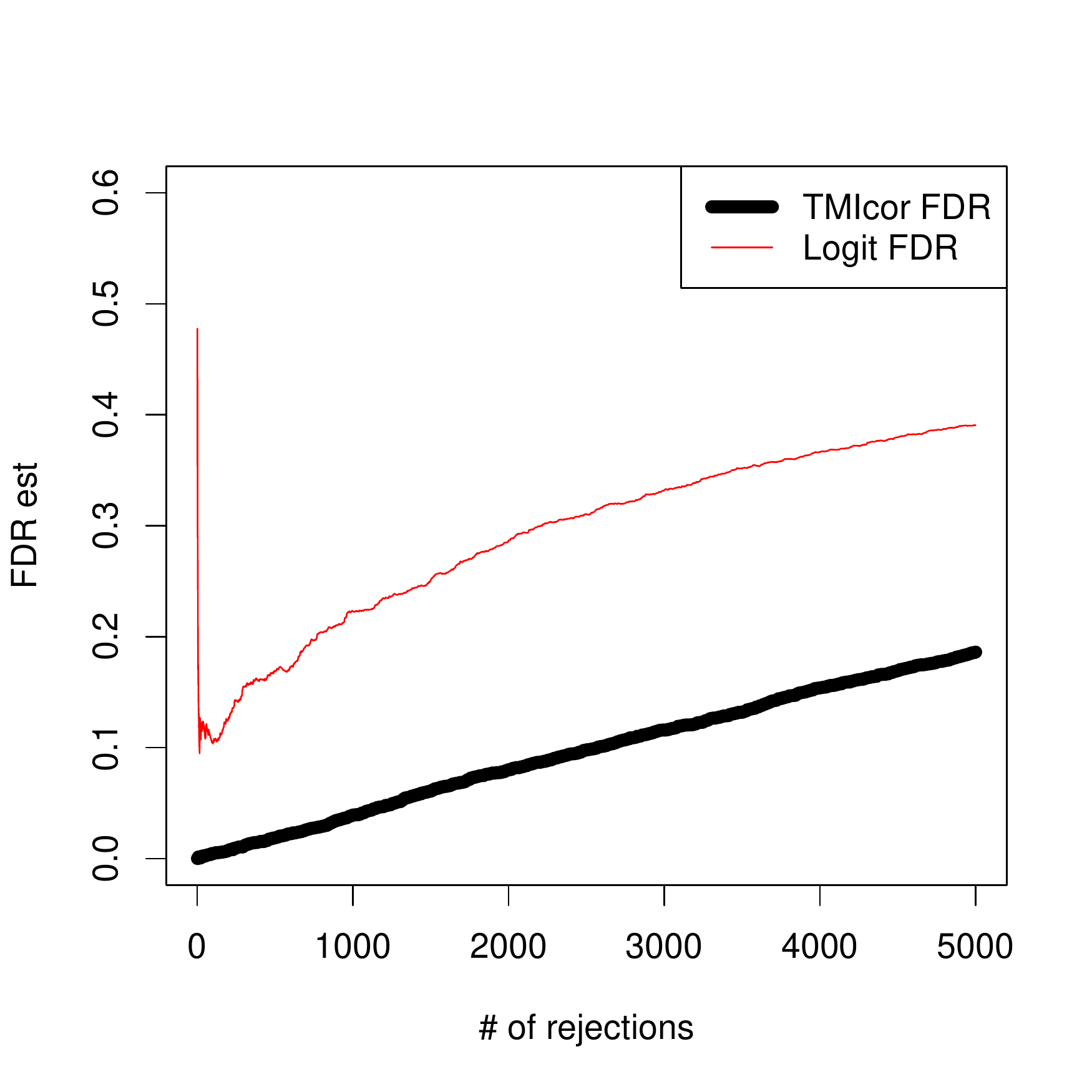}
  }
  \caption{Corhn's data; FDR estimates for TMIcor and logistic approaches for the $5000$ most significant marginal interactions}
\label{fig:fdr}
\end{figure}

Unfortunately interpreting $2570$ marginal interactions is difficult (even if all are true). Toward this end we consider the graphical representation of our analysis in Figure~\ref{fig:graphBig}. Each gene is a node in our graph, and edges between genes signify marginal interactions. In this plot we considered only the $1250$ of the $2570$ significant marginal interactions indicative of a decrease in correlation from healthy control to Crohn's (ie. $T_{j,k} > 0$). There is one large connected component, a few connected pairs and a large number of isolated genes. The connected component appears to be split into $2$ clusters. To get a better handle on this, we considered a more stringent cutoff for significant interactions --- at an FDR cutoff of $0.03$, we are left with $832$ significant interactions of which only $402$ have $T_{j,k} > 0$. We plot this graph in Figure~\ref{fig:graphSmall}: we see that our large connected component has divided into $2$. From here we further zoomed in on each component (now displaying only the $50$ most significant interactions per component), and can actually see which genes are are most important (in figure~\ref{fig:graphComp}).\\
\begin{figure}[!t]
  \centerline{
    \includegraphics[width=3in]{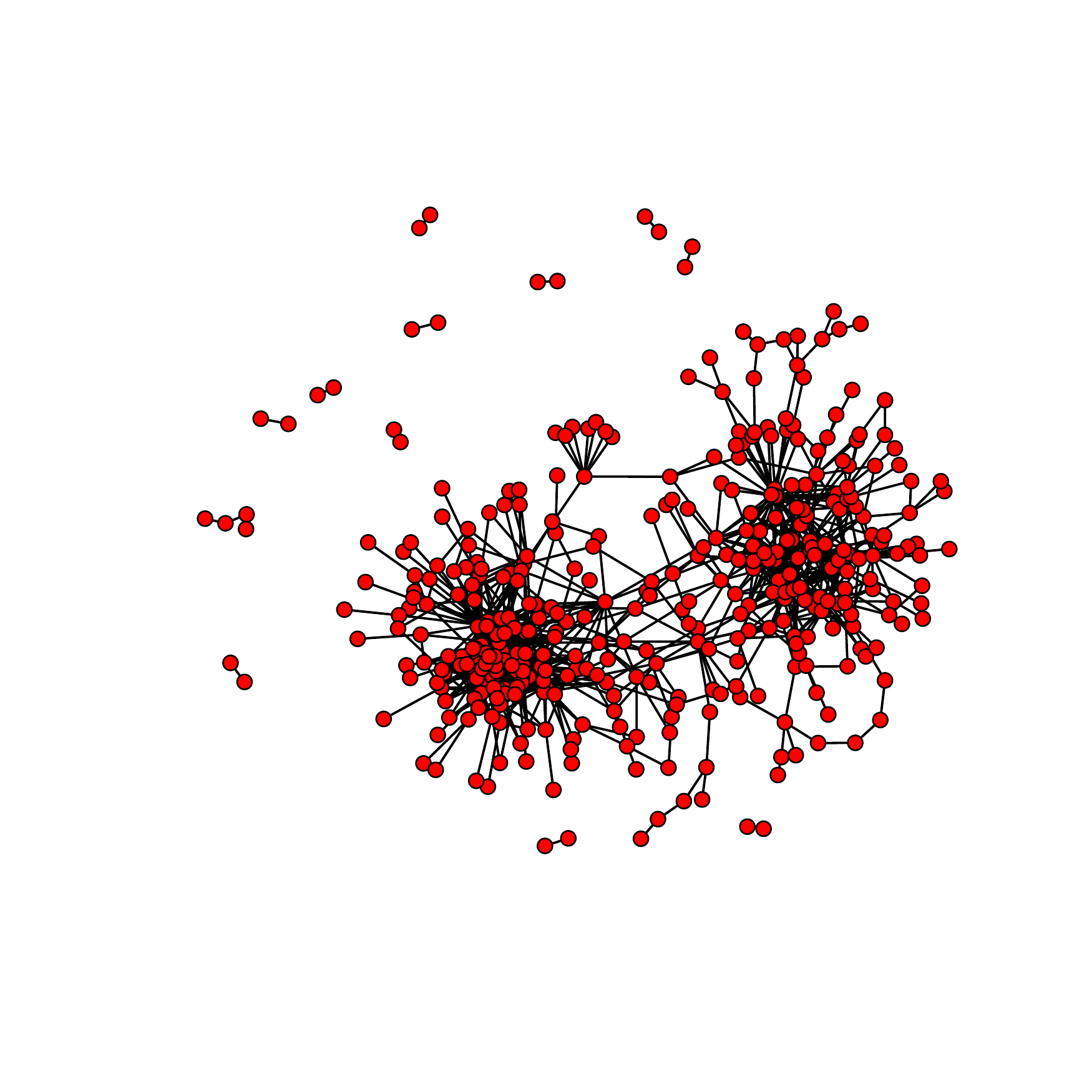}
  }
  \caption{Graph of $1250$ marginal interactions (with decreasing correlation) significant at FDR cutoff of $0.1$. Genes with no significant interactions not shown}
\label{fig:graphBig}
\end{figure}

\begin{figure}[!t]
  \centerline{
    \includegraphics[width=3in]{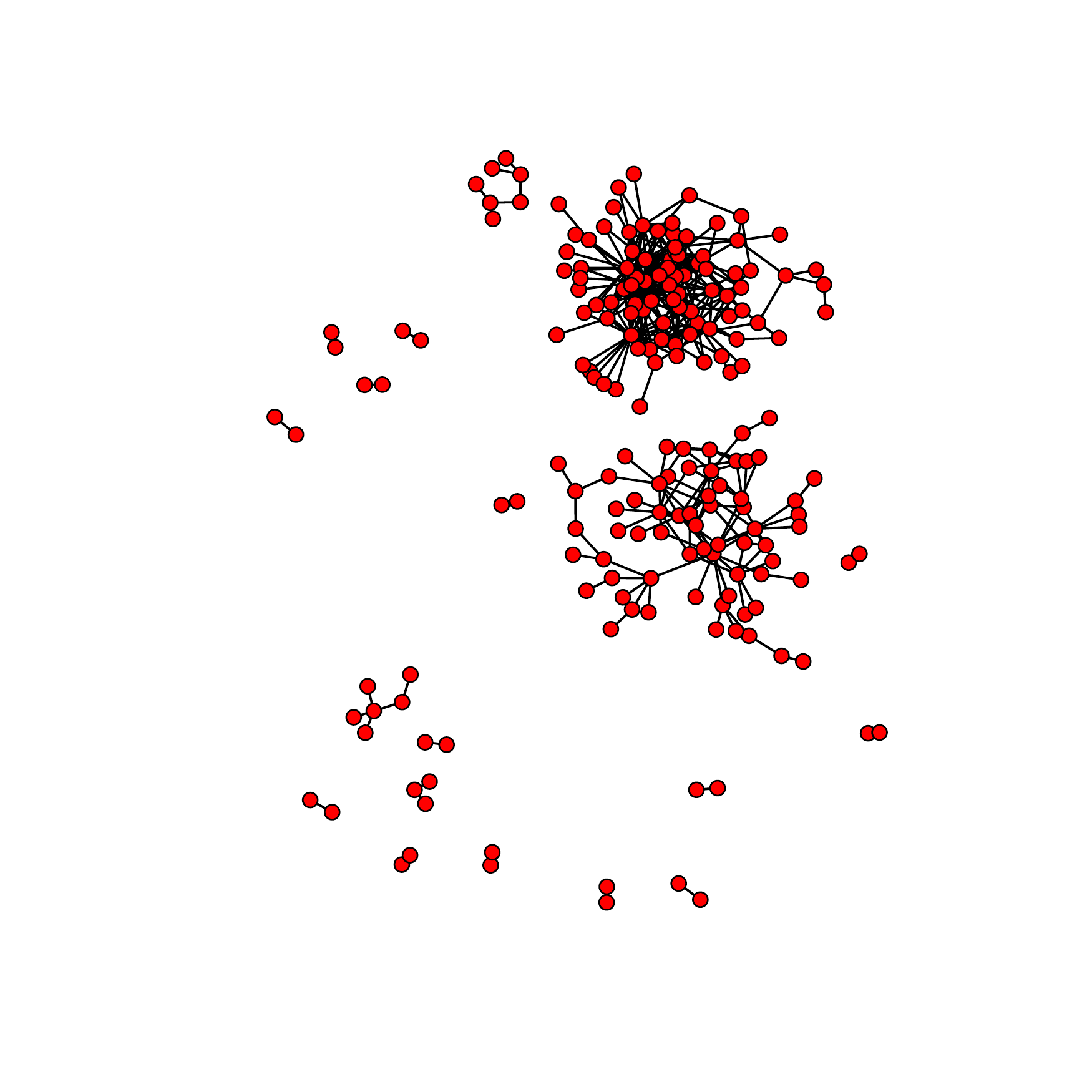}
  }
  \caption{Graph of $402$ marginal interactions (with decreasing correlation) significant at FDR cutoff of $0.03$. Genes with no significant interactions not shown}
\label{fig:graphSmall}
\end{figure}

  \begin{figure}[t!]
  \centerline{
    \mbox{\includegraphics[width=2.85in]{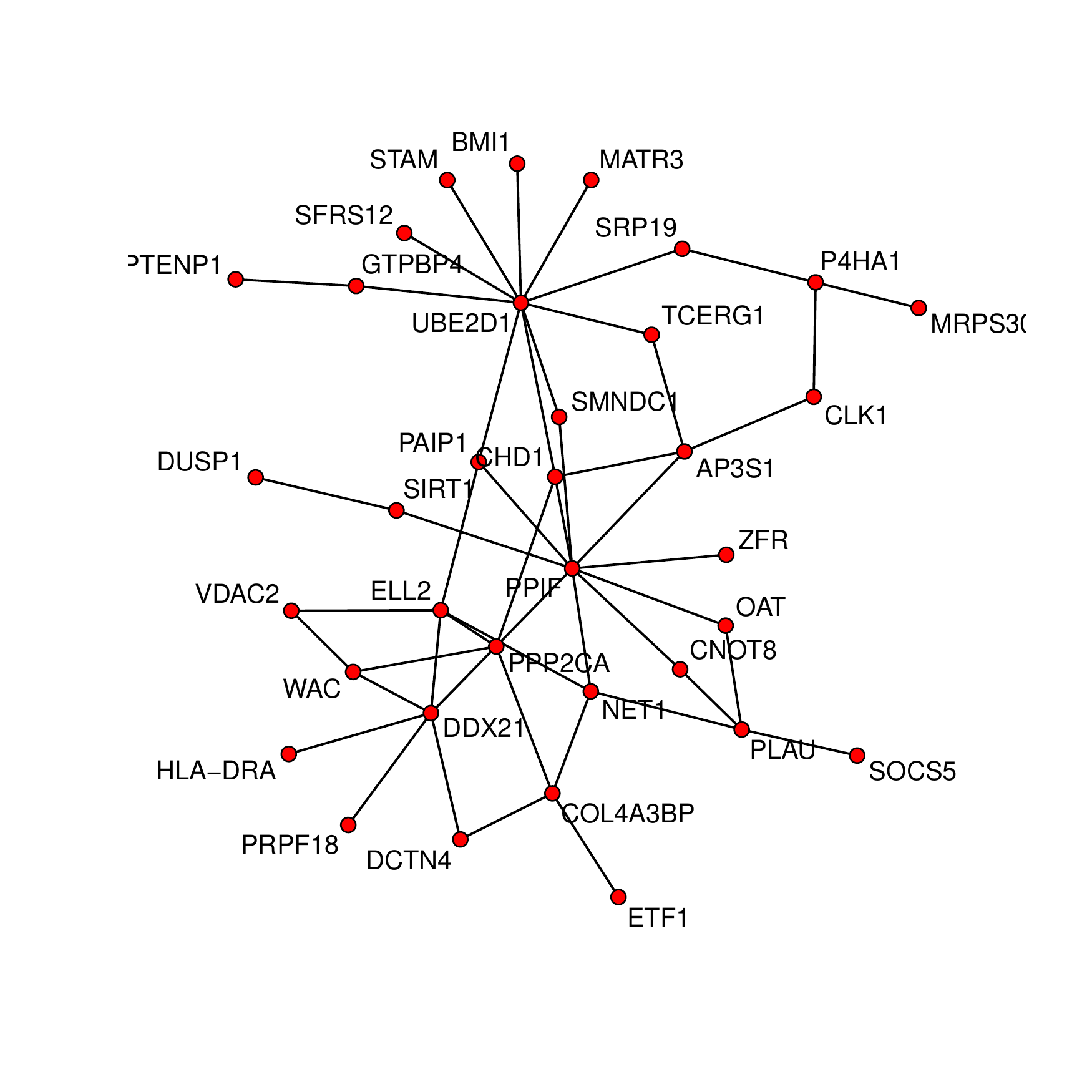}}
    \mbox{\includegraphics[width=2.85in]{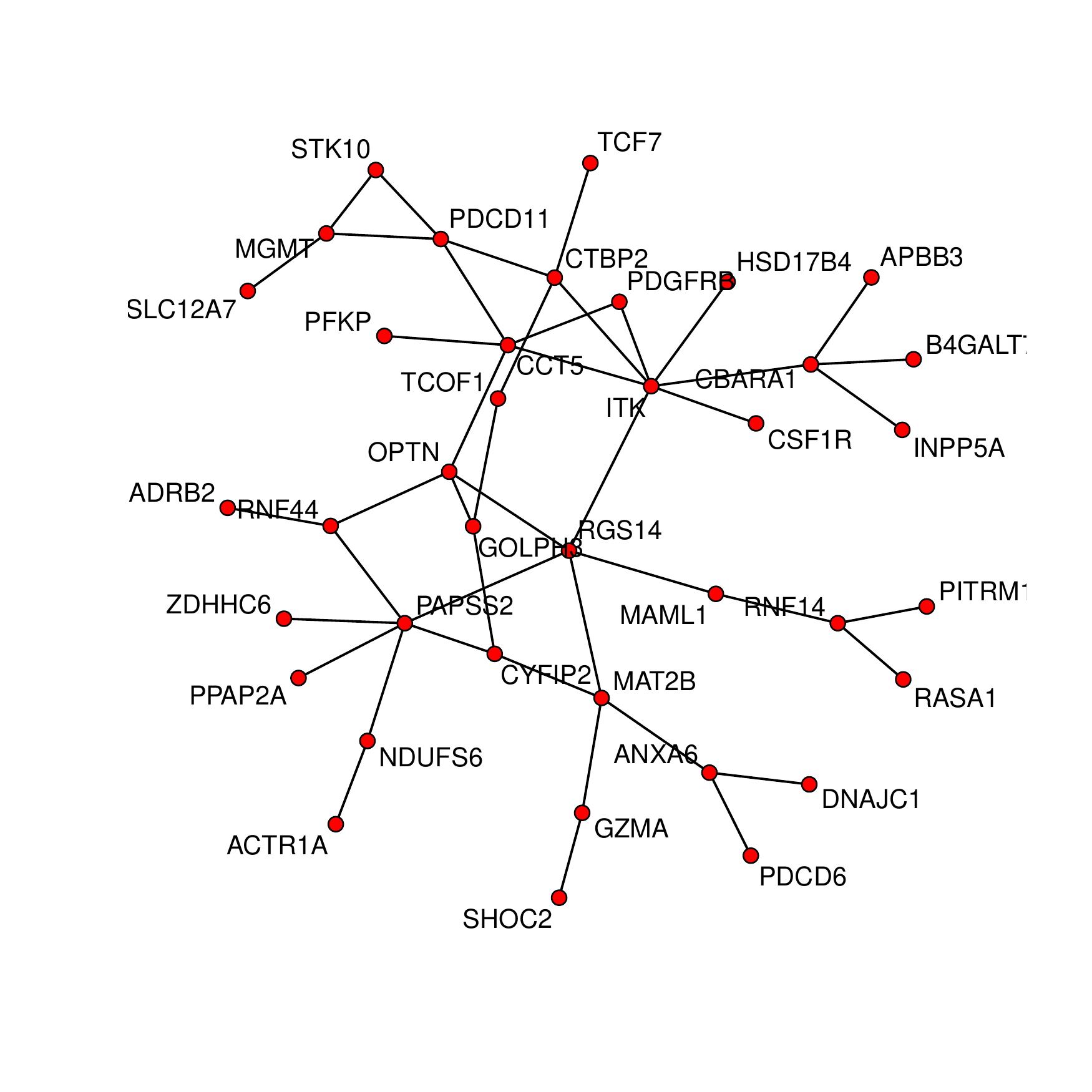}}
  }
  \caption{Graphs of the top $50$ marginal interactions in each cluster (and corresponding genes)}
  \label{fig:graphComp}
  \end{figure}

It appears, from this analysis, that there are two genetic pathways which are modified in Crohn's disease. Many of the genes in each cluster are already known to be indicated in Crohn's, but to our knowledge these interactions have not been considered.

\section{Dealing with Nuisance Variables}\label{sec:nuis}
Often, aside from the variables of interest, one may believe that other nuisance variables play a role in complex interactions. For example, it seems reasonable that many genes are conditionally independent given age, but are each highly correlated with age. Ignoring age, these genes would appear to be highly correlated, but this correlation is uninteresting to us. TMIcor can be adapted to deal with these nuisance variables provided there are few compared to the number of observations, they are continuous, and they are observed.

We resolve this issue by using partial correlations. Assume $x_j$ and $x_k$ are our variables of interest, and $z$ is a vector of potential confounders. Rather than comparing $\operatorname{cor}\left(x_j, x_k\right)$ in groups $1$ and $2$, we compare the partial correlations, $\operatorname{cor}\left(\left[x_j|z\right], \left[x_k|z\right]\right)$. This is done by first regressing our potential confounders, $Z$, out of all the other features, then running the remainder of the analysis as usual.

To adapt the original algorithm in Section~\ref{sec:method} to deal with nuisance variables we need only replace step $(1)$ by:
\begin{enumerate}
\item Replace our feature matrices $X_1$ and $X_2$ by
\[
\tilde{X}_m = \left[I - Z_m\left(Z_m^{\top}Z_m\right)Z_m^{\top}\right]X_m
\]
Now, mean center and scale $\tilde{X}$ within each group.
\end{enumerate}

We give more details motivating this approach and discussing potential computational advantages in appendix B.

\section{Asymptotics}\label{sec:asymptotics}
In this section we give two asymptotic results.  We show that if $n\rightarrow \infty$, and $\frac{\log p_n}{n} \rightarrow 0$, then under certain regularity conditions our procedure for testing marginal interactions (in the absence of nuisance variables) is asymptotically consistent --- with probability approaching $1$ it calls significant all true marginal interactions and makes no false rejections. Furthermore, using the permutation null, it also consistently estimates that the true FDR is converging to $0$. Because we only need $\frac{\log p_n}{n} \rightarrow 0$, $p_n$ may increase very rapidly in $n$.

We first give a result showing that for sub-Gaussian variables our null statistics converge to $0$ and our alternative statistics are asymptotically bounded away from $0$. The proof of this theorem is based on several technical lemmas which we relegate to appendix C.

\begin{theorem}\label{thm:con}

Let $\tilde{x}_{1(j)}$ and $\tilde{x}_{2(j)}$, $j=1,\ldots$ be random variables. Assume there is some $C>0$ such that for all $t\geq 0$
\[
\operatorname{P}\left(\left|x_{m(j)} - \operatorname{E}[x_{m(j)}]\right| > t\right) \leq \operatorname{exp}\left(1-t^2/C^2\right)
\]
for each $m=1,2$ . Let $\mu_{i(j)}$ denote the mean of $\tilde{x}_{m(j)}$ and $\sigma_{m(j)}^2$ its variance. For each $i\leq\infty$, let $x_{m(i,\cdot)}$ be independent realizations with the same distribution as $\tilde{x}_{m(\cdot)}$.

Let $p_n$ be a sequence of integers such that $\frac{\log p_n}{n} \rightarrow 0$.  Let $R_{m}$ be the correlation ``matrix'' (an infinite but countably indexed matrix) of the covariates from group $m$. Let $I$ denote the set of ordered pairs $(j,k)$ for which $R_{1(j,k)} \neq R_{2(j,k)}$, and $C_n$ denote the set of ordered pairs $(j,k)$ with $j,k\leq p_n$.

Assume for every $m$ and $j$, $\sigma_{m(j)}^2 \geq  \sigma_{min}^2$ (for some $\sigma_{min}^2 > 0$). Furthermore, assume that for all $(j,k)$ in each $I$, $\left|R_{1(j,k)} - R_{2(j,k)}\right| > \Delta_{\min}$ for some $\Delta_{\min}>0$ and that for $m=1,2$, $\operatorname{sup}_{j<k}\left|R_{m(j,k)}\right| < \rho_{\max}$ for some fixed $\rho_{\max} < 1$.\\

Now, given any $\epsilon_p >0$, and $0 < t < \Delta_{\min}$, if we choose $n$ sufficiently large, then with probability at least $1 - \epsilon_p$
\[
\left|T_{(j,k)}\right| \leq t
\]
for all $(j,k)$ in $C_n - I$, and
\[
\left|T_{(j,k)}\right| \geq t
\]
for all $(j,k)$ in $C_n\cap I$.
\end{theorem}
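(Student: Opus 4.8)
The plan is to reduce everything to a single uniform deviation bound for the sample correlations and then use only elementary properties of $\operatorname{arctanh}$. Concretely, I would first establish that there is a sequence $\delta_n \to 0$ (of order $\sqrt{(\log p_n)/n}$) such that, for $n$ large, with probability at least $1-\epsilon_p$,
\[
\sup_{m\in\{1,2\}}\ \sup_{(j,k)\in C_n}\left|\hat{R}_{m(j,k)} - R_{m(j,k)}\right| \leq \delta_n .
\]
Granting this, the conclusion follows quickly. Pick $\rho_{\max} < \rho' < 1$; since $\delta_n \to 0$, on the above event every $\hat{R}_{m(j,k)}$ with $(j,k)\in C_n$ eventually lies in $[-\rho',\rho']$, where $\operatorname{arctanh}$ is Lipschitz with constant $L := (1-\rho'^2)^{-1}$. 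Hence $\sup_{(j,k)\in C_n}\bigl|U_{m(j,k)} - \operatorname{arctanh}(R_{m(j,k)})\bigr| \leq L\delta_n$ for $m=1,2$. For a null pair $(j,k)\in C_n - I$ we have $\operatorname{arctanh}(R_{1(j,k)}) = \operatorname{arctanh}(R_{2(j,k)})$, so $|T_{(j,k)}| \leq 2L\delta_n \leq t$ once $n$ is large. For an alternative pair $(j,k)\in C_n\cap I$, the mean value theorem together with $\frac{d}{dx}\operatorname{arctanh}(x) = (1-x^2)^{-1}\geq 1$ on $(-1,1)$ gives $\bigl|\operatorname{arctanh}(R_{1(j,k)}) - \operatorname{arctanh}(R_{2(j,k)})\bigr| \geq |R_{1(j,k)} - R_{2(j,k)}| > \Delta_{\min}$, so $|T_{(j,k)}| \geq \Delta_{\min} - 2L\delta_n \geq t$ as soon as $2L\delta_n \leq \Delta_{\min} - t$. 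Choosing $n$ large enough for both thresholds at once completes this part.

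It remains to prove the uniform bound, which is where the technical lemmas of Appendix~C enter. The tail hypothesis makes each $x_{m(j)}$ sub-Gaussian with a common parameter $C$; in particular every variance $\sigma_{m(j)}^2$ lies in a fixed interval $[\sigma_{\min}^2,\sigma_{\max}^2]$, the upper endpoint following from sub-Gaussianity. I would control, uniformly over $j,k\leq p_n$ and $m$, the three families of deviations $\hat{\mu}_{m(j)} - \mu_{m(j)}$, $\hat{\sigma}_{m(j)}^2 - \sigma_{m(j)}^2$, and the sample cross-moment of coordinates $j,k$ minus its expectation. The first is an average of i.i.d.\ sub-Gaussian variables and concentrates at rate $\sqrt{(\log p_n)/n}$ after a union bound over $p_n$ coordinates; the other two involve products of sub-Gaussian variables, which are sub-exponential, so a Bernstein-type inequality yields the same $\sqrt{(\log p_n)/n}$ rate in the relevant small-deviation regime after a union bound over the $O(p_n^2)$ pairs. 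Because $(\log p_n)/n \to 0$, each of these suprema is $o(1)$ with probability $1-o(1)$: the union bound costs only a factor $O(p_n^2)$, whose logarithm is $O(\log p_n) = o(n)$ and is absorbed into the exponent. Writing $\hat{R}_{m(j,k)} = \hat{S}_{m(j,k)}/(\hat{\sigma}_{m(j)}\hat{\sigma}_{m(k)})$ with $\hat{S}$ the sample covariance, and using that on the good event the denominators are bounded away from $0$ and $\infty$ (thanks to the a priori bounds on $\sigma_{m(j)}^2$ together with their concentration), a short deterministic computation converts the three bounds into the claimed uniform bound on $|\hat{R}_{m(j,k)} - R_{m(j,k)}|$.

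The main obstacle — and the reason this is deferred to an appendix — is the uniform sub-exponential concentration of the sample covariances over a collection of pairs whose size grows like $p_n^2$: one must keep track of the Bernstein constants carefully enough to see that the $\sqrt{(\log p_n)/n}$ rate truly survives the union bound under the single assumption $(\log p_n)/n \to 0$, and one must handle the ratio defining $\hat{R}_{m(j,k)}$ without ever pretending the variances are known, relying only on their concentration together with $\sigma_{\min}^2 \leq \sigma_{m(j)}^2 \leq \sigma_{\max}^2$. By contrast, the Lipschitz transfer through $\operatorname{arctanh}$ and the separation argument for the alternatives are entirely routine.
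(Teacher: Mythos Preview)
Your proposal is correct and follows essentially the same route as the paper: first obtain a uniform bound $\sup_{m,(j,k)\in C_n}|\hat R_{m(j,k)}-R_{m(j,k)}|\le\delta_n$ of order $\sqrt{(\log p_n)/n}$ via sub-Gaussian/sub-exponential concentration of the sample means, variances, and cross-moments together with a union bound, then push this through $\operatorname{arctanh}$ using the mean value theorem. The paper packages these two steps as three separate lemmas (one reducing $\hat R-R$ to the three moment deviations, one giving the sub-Gaussian rates, and one handling the $\operatorname{arctanh}$ transfer), but the content is the same; the only cosmetic difference is that for alternative pairs the paper applies the mean value theorem directly to $|\operatorname{arctanh}(\hat R_{1})-\operatorname{arctanh}(\hat R_{2})|$ with the derivative lower bound $1$, obtaining $\Delta_{\min}-2\delta$, whereas you first center at the population $\operatorname{arctanh}$ values and pick up the Lipschitz constant $L$, obtaining $\Delta_{\min}-2L\delta_n$ --- both go to the same place since $\delta_n\to 0$.
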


The notation here is a little bit tricky, but the result is very straightforward: under some simple conditions, we find all marginal interactions and make no false identifications.

While there were a number of assumptions in the above theorem, most of these are fairly trivial and will almost always be found in practice: the variance must be bounded away from $0$ and the correlations bounded away from $\pm 1$. The assumption that the correlation differences are bounded below by a fixed $\Delta_{\min}$ for true marginal interactions is a bit more cumbersome, but may easily be relaxed to $\Delta_{\min} \rightarrow 0$ at a slow enough rate that $\Delta_{min} /\left[\log p /n\right]^{1/2} \rightarrow \infty$.

The astute reader might note that our assumption bounding the variance away from $0$ seems strange --- the distribution of the sample correlation is independent of the variance. This is necessary only because we assumed the covariates have a subgaussian tail with a shared constant $C$. One could have relaxed the bounded variance assumption to the assumption that $\left\{x_j/\sigma_j\right\}_{j=1,\ldots}$ have a sub-Gaussian tail with a shared constant $C$.

\subsection{Permutation Consistency}

Now that we have shown our procedure has FDR converging to $0$, we would like to show that it asymptotically estimates FDR consistently as well. In particular we show that as $n\rightarrow \infty$, if $\frac{\log p}{n}\rightarrow 0$, then with probability approaching $1$, for a random permutation, our permuted statistics converge to $0$ uniformly in probability ($\max_{j,k}\left|T_{(j,k)}^*\right| \leq t$ for any fixed $t>0$ with probability converging to $1$). Thus our estimated FDR converges to $0$ under the same conditions as our true FDR.

We begin with some notation. Let us consider an arbitrary permutation of class labels, $\Pi$. Let $\hat{\pi}$ denote the proportion of observations from class $1$ that remain in class $1$ after permuting.

We discuss a somewhat simplified procedure in our proof, as otherwise the algebra becomes significantly more painful (without any added value in clarity), but it is straightforward to carry the proof through to the full procedure. In our original procedure, after permuting class labels we recenter and rescale our variables within each class. Because we already centered and scaled variables before permuting, this step will have very little effect on our procedure (though it does have the nice effect of never giving $|\rho^{*}| > 1$). In this proof we consider a procedure identical in every way except without recentering and rescaling within each permutation.

Before we give the theorem, we would like to define a few new terms for clarity. For a given permutation $\Pi$, let $\Pi_i(m)\in\left\{0,1\right\}$ be the permuted class of the $i$-th observation originally in class $m$. Furthermore, let $\Pi\left(m,l\right)$ be the set of observations in class $m$ that
are permuted to class $l$, and let $\Pi\left(\cdot,l\right)$ be the set of observations in both classes permuted to class $l$, ie.
\begin{align*}
\Pi\left(m,l\right) &= \left\{i:\,\Pi_i(m) = l\right\}\\
\Pi\left(\cdot,l\right) &= \left\{(i,m):\,\Pi_i(m) = l\right\}
\end{align*}

Now, we give a result which shows that for any fixed $t>0$ if our variables are sub-Gaussian with some other minor conditions, then for $n\rightarrow \infty$ and $\log p/n\rightarrow 0$ with probability approaching $1$, none of our permuted statistics will be larger than $t$, or in other words, as our true converged to $0$, so will our estimated FDR $0$. As before, the proof of this theorem is based on several technical lemmas which we again leave to appendix C.

\begin{theorem}\label{thm:perm}
Let $\tilde{x}_{1(j)}$ and $\tilde{x}_{2(j)}$, $j=1,\ldots$ be random variables with 
\[
\operatorname{P}\left(|x_{m(j)} - \operatorname{E}\left[x_{m(j)}\right]|\geq t\right) \leq 1 - e^{t^2/C}
\]
for all $t>0$, and each $m=1,2$, with some fixed $C>0$. Let $\mu_{m(j)}$ denote the mean of $\tilde{x}_{m(j)}$ and $\sigma_{m(j)}^2$ its variance. For each $i\leq\infty$, let $x_{m(i,\cdot)}$ be independent realizations with the same distribution as $\tilde{x}_{m(\cdot)}$.

Let $p_n$ be a sequence of integers such that $\frac{\log p_n}{n} \rightarrow 0$. Let $R_{m}$ be the correlation ``matrix'' (an infinite but countably indexed matrix) of the covariates from class $m$.

Assume for every $m,\,j$, $\sigma_{m(j)}^2 \geq  \sigma_{min}^2$ (for some $\sigma_{min}^2 > 0$). Furthermore, assume that for $m=1,2$, $\operatorname{sup}_{j<k}\left|R_{m(j,k)}\right| < \rho_{\max}$ for some fixed $\rho_{\max} < 1$.

Now, given any $\epsilon_p >0$, and $0 < t$, if we choose $n$ sufficiently large and let $\Pi$ be a random permutation, then with probability at least $1 - \epsilon_p$
\[
\left|T_{(j,k)}^*\right| \leq t
\]
for all $(j,k)$ with $j,k \leq p_n$ where 
\[
T_{(j,k)}^* = \operatorname{arctanh}\left(\hat{R}_{\operatorname{perm:1(j,k)}}\right) - \operatorname{arctanh}\left(\hat{R}_{\operatorname{perm:2(j,k)}}\right)
\]
and
\[
\hat{R}_{\textrm{perm}:m(j,k)} = \frac{1}{n}\sum_{(i,l)\in\Pi(\cdot,m)}\left(\frac{x_{l(i,j)} - \hat{\mu}_{l(j)}}{\hat{\sigma}_{l(j)}}\right)\left(\frac{x_{m(i,k)} - \hat{\mu}_{l(k)}}{\hat{\sigma}_{l(k)}}\right)
\]
\end{theorem}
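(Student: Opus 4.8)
\medskip
\noindent\textbf{Proof strategy.}
The plan is to reduce the claim to a single statement: uniformly over $j,k\le p_n$, both permuted correlations $\hat R_{\mathrm{perm}:1(j,k)}$ and $\hat R_{\mathrm{perm}:2(j,k)}$ are within $o_P(1)$ of the \emph{same} pooled correlation $\hat\pi_1\hat R_{1(j,k)}+\hat\pi_2\hat R_{2(j,k)}$, where $\hat\pi_m=n_m/n$ and $\hat R_{m(j,k)}$ is the ordinary within-class sample correlation. Granting this, recall that $\mathrm{arctanh}$ is Lipschitz on any interval bounded away from $\pm1$; the hypothesis $\sup_{j<k}|R_{m(j,k)}|<\rho_{\max}<1$ together with the uniform concentration of $\hat R_{m(j,k)}$ (one of the appendix~C lemmas already invoked for Theorem~\ref{thm:con}) places all of $\hat R_{m(j,k)}$, the pooled correlations, and hence both permuted correlations into such an interval with probability at least $1-\epsilon_p$ once $n$ is large. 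On that event $|T^*_{(j,k)}|=|\mathrm{arctanh}(\hat R_{\mathrm{perm}:1(j,k)})-\mathrm{arctanh}(\hat R_{\mathrm{perm}:2(j,k)})|$ is a Lipschitz constant times $|\hat R_{\mathrm{perm}:1(j,k)}-\hat R_{\mathrm{perm}:2(j,k)}|$, which is $o_P(1)$ uniformly, hence below any fixed $t>0$ for $n$ large.

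To establish the reduction I would condition on the data, so that every standardized value $\tilde x_{l(i,j)}=(x_{l(i,j)}-\hat\mu_{l(j)})/\hat\sigma_{l(j)}$ is a fixed number, and exploit only the randomness of the permutation $\Pi$. Write the permuted class $m$ as a disjoint union of a uniformly random subset of $a_m$ of the original class-$1$ indices and a uniformly random subset of $b_m$ of the original class-$2$ indices, with $a_m+b_m=n_m$. Then $\hat R_{\mathrm{perm}:m(j,k)}$ is the convex combination, with weights $a_m/n_m$ and $b_m/n_m$, of the average of $\tilde x_{1(i,j)}\tilde x_{1(i,k)}$ over the chosen class-$1$ subset and the average of $\tilde x_{2(i,j)}\tilde x_{2(i,k)}$ over the chosen class-$2$ subset. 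The exact identity that makes this useful is that the average of $\tilde x_{l(i,j)}\tilde x_{l(i,k)}$ over \emph{all} of original class $l$ equals exactly $\hat R_{l(j,k)}$, precisely because the standardization was carried out within original class $l$ before permuting. Two concentration inputs then finish the job. First, $a_m$ and $b_m$ are hypergeometric, so an elementary tail bound gives $a_1/n_1,a_2/n_2\to\hat\pi_1$ and $b_1/n_1,b_2/n_2\to\hat\pi_2$ in probability over $\Pi$; this need not be uniform in $(j,k)$ since the weights do not depend on $(j,k)$. Second, each size-$a$ random-subset average of the cross-products is within $o(1)$ of the corresponding full average $\hat R_{l(j,k)}$, \emph{uniformly over all $j,k\le p_n$}: this is a sampling-without-replacement concentration bound (Hoeffding--Serfling, or Bernstein using the sub-exponential structure) plus a union bound over the at most $\binom{p_n}{2}$ pairs, and it needs a bound on the magnitude of the summands $\tilde x_{l(i,j)}\tilde x_{l(i,k)}$, supplied by the sub-Gaussian maximal inequality $\max_{i\le n_l,\,j\le p_n}|\tilde x_{l(i,j)}|=O(\sqrt{\log(np_n)})$ --- another appendix~C lemma --- together with the accuracy of $\hat\mu_{l(j)},\hat\sigma_{l(j)}$; the condition $\log p_n/n\to0$ is exactly what closes these union bounds. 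Combining the two inputs, $\hat R_{\mathrm{perm}:m(j,k)}=\hat\pi_1\hat R_{1(j,k)}+\hat\pi_2\hat R_{2(j,k)}+o_P(1)$ uniformly in $j,k\le p_n$ for each of $m=1,2$ with the same limiting weights, so the difference of the two permuted correlations is $o_P(1)$ uniformly, which is the reduction.

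Finally, since the estimated expected number of false rejections at threshold $t$ is $\frac1A\sum_{a=1}^A\#\{|T^{*a}_{(j,k)}|>t\}$ and each permuted replicate satisfies $\max_{j,k\le p_n}|T^{*a}_{(j,k)}|\le t$ with probability approaching $1$, the theorem also yields that the estimated FDR is $0$ with probability approaching $1$ for every fixed $t>0$, matching the behavior of the true FDR established via Theorem~\ref{thm:con}.

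The step I expect to be the real obstacle is the second concentration input: controlling the sampling-without-replacement fluctuations of products of merely sub-exponential quantities \emph{simultaneously} over all $\binom{p_n}{2}$ pairs, and arranging for the exponential rate to dominate $p_n^2$ under only the hypothesis $\log p_n/n\to0$. One must also sequence the two sources of randomness carefully --- first fix a high-probability event for the data on which all $\hat\mu_{l(j)},\hat\sigma_{l(j)},\hat R_{l(j,k)}$ are accurate and the standardized entries are uniformly $O(\sqrt{\log(np_n)})$, and only then run the permutation argument conditionally on that event, so that the conditioning used above is legitimate. A minor point, implicit in the framework: the class proportions $n_m/n$ should be bounded away from $0$ and $1$ so that $a_m$ and $b_m$ are of order $n$.
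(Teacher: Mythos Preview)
Your overall architecture---show both $\hat R_{\mathrm{perm}:1}$ and $\hat R_{\mathrm{perm}:2}$ collapse to a common target, then use Lipschitzness of $\operatorname{arctanh}$ on $[-\rho_{\max}-\delta,\rho_{\max}+\delta]$---matches the paper exactly, as does the decomposition of each permuted correlation into a convex combination of within-original-class pieces with mixing weights governed by a hypergeometric $\hat\pi$. The paper packages this as a separate lemma (its Lemma~\ref{lemma:perm}) whose conclusion is $\|\hat R_{\mathrm{perm}:m}-R_{\mathrm{perm}}\|_\infty\le\delta$ for both $m$, with the common target taken to be the \emph{population} average $R_{\mathrm{perm}}=\tfrac12 R_1+\tfrac12 R_2$ rather than your pooled sample correlation; this is only a cosmetic difference.

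Where you and the paper genuinely diverge is in which source of randomness does the work. You condition on the data and use the permutation randomness, invoking a Hoeffding--Serfling/Bernstein bound for sampling without replacement on the products $\tilde x_{l(i,j)}\tilde x_{l(i,k)}$. The paper instead conditions (implicitly) on the permutation and uses the \emph{data} randomness: once $\Pi$ is fixed, the $\hat\pi n$ observations landing in $\Pi(l,1)$ are, by exchangeability of i.i.d.\ rows, themselves an i.i.d.\ sample from the class-$l$ distribution, so the paper simply reapplies the sub-Gaussian concentration lemmas (Lemmas~\ref{lemma1} and~\ref{lemma2}) already proved for Theorem~\ref{thm:con} to this subsample, obtaining $\|\hat R^{(l)}_{\mathrm{perm}:1}-R_l\|_\infty\le\delta/3$ directly. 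The triangle inequality~\eqref{eq:permTriangle} then combines the pieces with the concentration of $\hat\pi$ about $1/2$.

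The obstacle you flag at the end is real and is precisely what the paper's route sidesteps. In your conditional-on-data argument, the summands $\tilde x_{l(i,j)}\tilde x_{l(i,k)}$ are only bounded by $O(\log(np_n))$ on the good event, so either Hoeffding--Serfling or a Bernstein-type inequality for sampling without replacement, after the union bound over $\binom{p_n}{2}$ pairs, yields uniform control only under something like $(\log p_n)^2/n\to 0$ or $(\log p_n)^3/n\to 0$, not the stated $\log p_n/n\to 0$. The paper's approach avoids this loss entirely because the subsample inherits the full sub-Gaussian/sub-exponential tail structure, and Lemma~\ref{lemma2} already delivers the $O(\sqrt{\log p_n/n})$ rate. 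If you want to keep your conditioning order, you would need a sharper sampling-without-replacement inequality that exploits the (conditional) variance $O(1)$ of the products and pays only logarithmically for the envelope; otherwise, swapping the order of conditioning as the paper does is both simpler and sufficient.
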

The notation is again somewhat ugly, but the result is very straightforward: under some simple conditions, our permuted statistics are very small. In particular from the proof one can see that $\operatorname{sup}\left\{T_{(j,k)}^*\right\} = O_p\left(\sqrt{\log p_n/n}\right)$.

Note there is an implicit indexing of $n$ in $\hat{R}_{\textrm{perm}:m(j,k)}$ (it seemed unneccessary to add more indices). As in theorem~\ref{thm:con}, some of our conditions may be relaxed. Instead of bounding $\sigma_j^2$ below, we need only bound $C\sigma_j$ below. Also, rather than choose a fixed cutoff, $t>0$, we may use any sequence $\left\{t_n\right\}$ with $t_n/\left(\log p_n/n\right)^{1/2} \rightarrow \infty$. Also, as noted before, the result we have just shown ignores the restandardizing within each permutation, however it is straightforward (though algebraicly arduous, and not insightful) to extend this result to that case as well.

As a last note, in theorem~\ref{thm:perm}, we gave our consistency result for only a single permutation. This result can easily be extended to any fixed number of permutations using a union bound. This was left out of the original statement/proof as the notation is already clunky and the extension is straightforward.

Through theorems \ref{thm:con} and \ref{thm:perm} we have shown that, under fairly relaxed conditions, our procedure is asymptotically consistent at discovering marginal interactions and that the permutation null reflects this.

\section{Discussion}
In this paper we have discussed marginal interactions for logistic regression in the framework of forward and backward models. We have developed a permutation based method, TMIcor, which leverages the backward model. We have shown its efficacy on real and simulated data and given asymptotic results showing its consistency and convergence rate. We also plan to release a publically available {\tt R} implementation.

\section{Appendix A}\label{sec:perm}
In this section we give more details on our permutation-based estimate of FDR, and discuss a potential alternative. Recall that we are using the permutations to approximate
\begin{equation}\label{eq:numer}
\sum_{(j,k) \textrm{ null}} \operatorname{P}(|T_{(j,k)}| > t).
\end{equation}
For the moment, assume that all covariates in both classes have mean $0$
and variance $1$, and that we did not do any sample standarization. Then, under the null hypothesis that $R_{1(j,k)} =
R_{2(j,k)}$, $T_{(j,k)}$ calculated under the original class
assignments and $T^*_{(j,k)}$ calculated under any permuted class
assignments have the same distribution, so 
\[
\sum_{(j,k) \textrm{ null}} \operatorname{P}(|T_{(j,k)}| > t) = \sum_{(j,k) \textrm{ null}} \operatorname{P}(|T^*_{(j,k)}| > t)
\]
which is reasonably (and unbiasedly) approximated by
\[
\sum_{(j,k) \textrm{ null}} \frac{1}{A}\sum_{a=1}^AI(|T^{*a}_{(j,k)}| > t).
\]
Because we do not know which genes are null, our actual estimate of
\eqref{eq:numer} is 
\begin{align}\label{eq:bias}
\sum_{(j,k)} \frac{1}{A}\sum_{a=1}^AI(|T^{*a}_{(j,k)}| > t) &=\sum_{(j,k) \textrm{ null}} \frac{1}{A}\sum_{a=1}^A I(|T^{*a}_{(j,k)}| >
t)\\
&+ \sum_{(j,k) \textrm{ alternative}} \frac{1}{A}\sum_{a=1}^A I(|T^{*a}_{(j,k)}| > t)
\end{align}
This gives a slight conservative bias (especially small if most marginal interactions are null). One should also note that
unlike the null statistics, for the alternative $(j,k)$, $T^*_{(j,k)}$
are not distributed $N\left(0,\frac{2}{n-3}\right)$; they are still
mean $0$, but the variance is increased. However, this conservative bias is very slight --- in general there are few alternative hypotheses, and the variance increase is not large.

Because in practice we do not have mean $0$, variance $1$ for all
covariates in both classes, we must standardize before running our
procedure. Otherwise, instead of testing for a changing correlation,
we are actually testing for a different mean, variance, or correlation
between classes. The effect of standardizing with the sample mean and variance rather than the true values is asymptotically washed out, and while the variance of our tests is increased for small samples, this increase is only minimal.

An alternative to permutations, as discussed in \citet{efron2010ebayes}, is to directly
estimate the numerator using the approximate theoretical distribution of the null
statistics. Each null statistic is asymptotically
$N\left(0,\frac{1}{n_1-3} + \frac{1}{n_2-3}\right)$, so for $(j,k)$ null
\[
\operatorname{P}(|T_{(j,k)}| > t) = 2\Phi\left(-\frac{t (n_1 -3) (n_2-3)}{n_1
  + n_2 - 6}\right).
\]
Now we can conservatively approximate the quantity in Eq~\eqref{eq:numer} by
\begin{align*}
\sum_{(j,k) \textrm{ null}}P\left(|T_{(j,k)}| > t\right) &\leq p(p-1)/2 \cdot P\left(|T_{\textrm{null}}| > t\right)\\
&= p(p-1)\cdot\Phi\left(-\frac{t (n_1 -3) (n_2-3)}{n_1
  + n_2 - 6}\right)
\end{align*}
While this approach is reasonable and simple, it is less robust than using permutations, and in practice, even for truly Gaussian data, it is only slightly more efficient.

\section{Appendix B}
Before proceeding, we remind the reader that $x$ are our variables of interest and $z$ are potential confounding variables. Furthermore we are interested in comparing $\operatorname{cor}\left(\left[x_j|z\right], \left[x_k|z\right]\right)$ between groups. From basic properties of the Gaussian distribution we know that
\[
x|z\sim N\left[\mu_x + \Sigma_{(x,z)}\Sigma_{z}^{-1}\left(z - \mu_z\right), \Sigma_{(x|z)}\right]
\]
where $\Sigma_{(x|z)}$ is the variance/covariance matrix of $x$ given $z$, $\Sigma_{(x,z)}$ is the covariance matrix between $x$ and $z$, $\Sigma_z$ is the variance matrix of $z$, and $\mu_x$ and $\mu_z$ are the means of $x$ and $z$. Now, if $\mu_x,\,\mu_z,\,\Sigma_{(x,z)},$ and $\Sigma_{z}$ were known, then the MLE for $\Sigma_{(x|z)}$ would be
\[
\hat{\Sigma}_{(x|z)} = \frac{1}{n} \left[X - 1\mu_x^{\top} - \left(Z - 1\mu_z^{\top}\right)\Sigma_{z}^{-1}\Sigma_{(z,x)}\right]^{\top}\left[X - 1\mu_X^{\top} - \left(Z - 1\mu_Z^{\top}\right)\Sigma_{Z}^{-1}\Sigma_{(z,x)} \right].
\]
Unfortunately, these nuisance parameters are unknown. However we can also estimate them by maximum likelihood. This gives us the estimate
\begin{align*}
\hat{\Sigma}_{(X|Z)} &= \frac{1}{n}\left[\tilde{X} - \tilde{Z}\left(\tilde{Z}^{\top}\tilde{Z}\right)^{-1}\tilde{Z}^{\top}\tilde{X}\right]^{\top}\left[\tilde{X} - \tilde{Z}\left(\tilde{Z}^{\top}\tilde{Z}\right)^{-1}\tilde{Z}^{\top}\tilde{X}\right]\\
&=\frac{1}{n}\left[\operatorname{P}_{\tilde{Z}\perp}\left(\tilde{X}\right)\right]^{\top}\left[\operatorname{P}_{\tilde{Z}\perp}\left(\tilde{X}\right)\right]
\end{align*}
where $\tilde{Z}$ is the standardized version of $Z$, and $\tilde{X}$ is the standardized version of $X$, and $\operatorname{P}_{\tilde{Z}\perp}$ is the projection onto the orthogonal complement of the column space of $\tilde{Z}$. So, our estimate of partial correlation is just an estimate of correlation with $Z$ regressed out of both covariates. We use this to contruct our permutation null. In the orginal algorithm, we mean centered and scaled before permuting; here we do the equivalent ---  we project our variables of interest onto the orthogonal complement of our nuisance variables, and then center/scale them. Now we are ready to permute. We permute these ``residuals'', and calculate permuted correlations as before.

Before proceeding, we note that for $n$ sufficiently large $n$ ($n >> p$) one might use a similar approach to consider partial correlations rather than marginal correlations in our original algorithm (conditioning out all covariates except any particular $2$). However, in general $n << p$ and thus $\operatorname{P}_{\perp} \equiv 0$ rendering this approach ineffective --- this approach only works for nuisance variables because we assume that there are very few relative to the number of observations.

As stated in the text, to adapt the original algorithm to deal with nuisance variables we need only replace step $(1)$ by:
\begin{enumerate}
\item Replace our feature matrices $X_1$ and $X_2$ by
\[
\tilde{X}_m = \left[I - Z_m\left(Z_m^{\top}Z_m\right)Z_m^{\top}\right]X_m
\]
Now, mean center and scale $\tilde{X}$ within each group.
\end{enumerate}

One may note that we only calculate $\tilde{X}$ once per class, at the beginning of our procedure, not in each permutation. We do this for a similar reason that we standardize our variables before permuting --- because we are not testing the hypothesis that the relationship between $X$ and $Z$ is the same in both groups. If we relcalulate after each permutation then we are implicitly assuming that this relationship is the same in both groups under the null.

Even with nuisance variables this approach is very computationally fast. Projecting our original variables onto $Z\perp$ can be done in $O\left(npp_{\textrm{nuis}}\right)$ operations where $p_{\textrm{nuis}}$ is the number of nuisance variables. Thus the total runtime of this algorithm is $O\left(npp_{\textrm{nuis}} + Anp(p-1)/2\right)$ where $A$ is the number of permutations --- this is dominated by the second term, which is independent of the number of nuisance parameters. In contrast, if we were to use the standard approach (fitting pairwise logistic regressions with nuisance variables), its runtime would be $O\left[\left(iter\right)(3+p_{\textrm{nuis}})^2np(p-1)/2\right]$ where $iter$ is the number of iterations of the algorithm for finding the MLE. In general $A \sim 100$ and $iter \sim 5$. Now, since $(3+p_{\textrm{nuis}})^2$ grows very quickly in $p_{\textrm{nuis}}$, for even a small number of nuisance parameters the logistic approach becomes much slower.

\section{Appendix C}
This appendix contains the technical details from the theorems in section~$7$ of the main manuscript. We begin with a number of technical lemmas:

First, as one might imagine, if we can consistently estimate our correlation matrices, applying a Fisher transformation should not change much. We formalize this with the next lemma.

\begin{lemma}\label{lemma3}
Let $R_1$, $R_2$ be correlation matrices, and $\hat{R}_1$, $\hat{R}_2$ be estimates of $R_1$ and $R_2$.

Let $I$ be the set of ordered pairs $(j,k)$ where $R_{1(j,k)} \neq R_{2(j,k)}$. Assume for all $(j,k)$ in $I$, $\left|R_{1(j,k)} - R_{2(j,k)}\right| > \Delta_{\min}$ for some $\Delta_{\min} > 0$ and that for $m=1,2$ we have $\operatorname{sup}_{j<k}\left\|R_{m(j,k)}\right\|_{\infty} < \rho_{\max}$ for some fixed $\rho_{\max} < 1$.\\

Further assume that for $m=1,2$, $\left\|R_m - \hat{R}_m\right\|_{\infty} \leq \delta$ (for some $\delta < 1-\rho_{\max}$). Then for all $(j,k)$ in $I^{c}$ with $j \neq k$ we have
\begin{equation}\label{eq:close}
\left|\operatorname{arctanh}\left(\hat{R}_{1(j,k)}\right) - \operatorname{arctanh}\left(\hat{R}_{2(j,k)}\right)\right| \leq \frac{2\delta}{1-\left(\rho_{\max} + \delta\right)^2}
\end{equation}
and for all $(j,k)$ in $I$ with $j \neq k$ we have
\begin{equation}\label{eq:far}
\left|\operatorname{arctanh}\left(\hat{R}_{1(j,k)}\right) - \operatorname{arctanh}\left(\hat{R}_{2(j,k)}\right)\right| \geq \Delta_{\min} - 2\delta
\end{equation}
\end{lemma}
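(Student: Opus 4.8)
The plan is to reduce everything to two elementary facts about $f(x) = \operatorname{arctanh}(x)$ on the interval $(-1,1)$: since $f'(x) = 1/(1-x^2)$, we have (i) $f$ is Lipschitz with constant $1/(1-c^2)$ on any subinterval $[-c,c]$ with $c < 1$, and (ii) because $f'(x) \geq 1$ everywhere on $(-1,1)$, the map $f$ weakly expands distances: $\left|f(a) - f(b)\right| \geq \left|a-b\right|$ for all $a,b \in (-1,1)$. Before using either fact, I would first record that the hypothesis $\delta < 1 - \rho_{\max}$, together with the bound on the off-diagonal entries of $R_m$ and $\left\|R_m - \hat{R}_m\right\|_\infty \leq \delta$, forces $\left|\hat{R}_{m(j,k)}\right| \leq \left|R_{m(j,k)}\right| + \delta \leq \rho_{\max} + \delta < 1$ for every off-diagonal pair $(j,k)$ and $m=1,2$, so that $\operatorname{arctanh}$ is well defined (indeed smooth) at all the arguments appearing in \eqref{eq:close} and \eqref{eq:far}.

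For \eqref{eq:close}, fix $(j,k) \in I^c$ with $j \neq k$ and set $r = R_{1(j,k)} = R_{2(j,k)}$. Both estimates lie in $[-(\rho_{\max}+\delta),\,\rho_{\max}+\delta]$, and each is within $\delta$ of $r$, so a triangle inequality gives $\left|\hat{R}_{1(j,k)} - \hat{R}_{2(j,k)}\right| \leq 2\delta$. Applying fact (i) with $c = \rho_{\max} + \delta$ then yields
\[
\left|\operatorname{arctanh}\left(\hat{R}_{1(j,k)}\right) - \operatorname{arctanh}\left(\hat{R}_{2(j,k)}\right)\right| \leq \frac{\left|\hat{R}_{1(j,k)} - \hat{R}_{2(j,k)}\right|}{1 - \left(\rho_{\max}+\delta\right)^2} \leq \frac{2\delta}{1 - \left(\rho_{\max}+\delta\right)^2},
\]
which is exactly \eqref{eq:close}.

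For \eqref{eq:far}, fix $(j,k) \in I$ with $j \neq k$. By fact (ii) applied to $a = \hat{R}_{1(j,k)}$ and $b = \hat{R}_{2(j,k)}$, the left-hand side of \eqref{eq:far} is at least $\left|\hat{R}_{1(j,k)} - \hat{R}_{2(j,k)}\right|$, and then a reverse triangle inequality gives
\[
\left|\hat{R}_{1(j,k)} - \hat{R}_{2(j,k)}\right| \geq \left|R_{1(j,k)} - R_{2(j,k)}\right| - \left|\hat{R}_{1(j,k)} - R_{1(j,k)}\right| - \left|\hat{R}_{2(j,k)} - R_{2(j,k)}\right| \geq \Delta_{\min} - 2\delta,
\]
using $\left|R_{1(j,k)} - R_{2(j,k)}\right| > \Delta_{\min}$ and $\left\|R_m - \hat{R}_m\right\|_\infty \leq \delta$. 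Chaining the two displays gives \eqref{eq:far}.

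There is no real obstacle here; the lemma is essentially a one-line mean value theorem argument, and the only points requiring care are bookkeeping ones: (a) verifying $\operatorname{arctanh}$ is defined at the estimated correlations, which is precisely where $\delta < 1 - \rho_{\max}$ enters, and (b) keeping the direction of each inequality straight — the upper bound must use the Lipschitz constant on the \emph{enlarged} interval of radius $\rho_{\max}+\delta$ that actually contains both estimates, whereas the lower bound needs only the universal expansion property of $\operatorname{arctanh}$. One could sharpen \eqref{eq:far} slightly by exploiting the expansion factor more carefully, but the stated weak bound is all that is needed in the proofs of Theorems \ref{thm:con} and \ref{thm:perm}.
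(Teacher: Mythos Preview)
Your argument is correct and is essentially the same as the paper's: both parts use the mean value theorem on $\operatorname{arctanh}$, with the sup of $1/(1-r^2)$ over $[-(\rho_{\max}+\delta),\rho_{\max}+\delta]$ for \eqref{eq:close} and the trivial infimum $\inf_r 1/(1-r^2)\geq 1$ for \eqref{eq:far}. Your write-up is in fact slightly more careful than the paper's, since you explicitly verify that $\delta<1-\rho_{\max}$ keeps all estimated correlations inside $(-1,1)$ and spell out the reverse triangle inequality yielding $|\hat R_{1(j,k)}-\hat R_{2(j,k)}|\geq \Delta_{\min}-2\delta$.
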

One immediate consequence of this lemma is that as $\delta \rightarrow 0$, for $(j,k)$ in $I^{C}$ our statistics $T_{(j,k)}$ converge to $0$ (at rate O($\delta)$), and for $(j,k)$ in $I$, $T_{(j,k)}$ are bounded away from $0$ (at a rate of at least O($\delta)$).

\begin{proof}[{\bf Proof of Lemma~\ref{lemma3}}]
We begin by showing that for all $(j,k)$ in $I^{c}$ with $j \neq k$ we have
\[
\left|\operatorname{arctanh}\left(\hat{R}_{1(j,k)}\right) - \operatorname{arctanh}\left(\hat{R}_{2(j,k)}\right)\right| \leq \frac{2\delta}{1-\left(\rho_{\max} + \delta\right)^2}
\]
The mean value theorem gives us that
\[
\left|\operatorname{arctanh}\left(\hat{R}_{1(j,k)}\right) - \operatorname{arctanh}\left(\hat{R}_{2(j,k)}\right)\right| \leq \operatorname{sup}_{r}\left|\frac{1}{1-r^2}\right|\left|\hat{R}_{1(j,k)} - \hat{R}_{2(j,k)}\right|
\]
where the supremum is taken over $r$ in $\left[\hat{R}_{1(j,k)},\, \hat{R}_{2(j,k)}\right]$. Note that for $m=1,2$, we have $|\hat{R}_{m(j,k)}| < \rho_{\max} + \delta$, and $\left|\hat{R}_{1(j,k)} - \hat{R}_{2(j,k)}\right| \leq 2\delta$, for $(j,k)$ not in $I$. Thus,
\[
\operatorname{sup}_{r}\left|\frac{1}{1-r^2}\right|\left|\hat{R}_{1(j,k)} - \hat{R}_{2(j,k)}\right| \leq \frac{2\delta}{1-\left(\rho_{\max} + \delta\right)^2}.
\]
Now for $(j,k)$ in $I$, we again use the mean value theorem:
\[
\left|\operatorname{arctanh}\left(\hat{R}_{1(j,k)}\right) - \operatorname{arctanh}\left(\hat{R}_{2(j,k)}\right)\right| \geq \operatorname{inf}_{r}\left|\frac{1}{1-r^2}\right|\left|\hat{R}_{1(j,k)} - \hat{R}_{2(j,k)}\right|
\]
and our result follows because $\left|\hat{R}_{1(j,k)} - \hat{R}_{2(j,k)}\right| \geq  \Delta_{\min} - 2\delta$.
\end{proof}


Now we consider convergence of these sample correlation matrices. We show that their convergence depends only on the convergence of the sample means ($\hat{\mu}_j$), variances ($\hat{\sigma}_j^2$), and pairwise inner products. We formalize this in the following lemma.

\begin{lemma}\label{lemma1}
Let $\tilde{x}_j$, $j=1,\ldots$ be random variables. Let $\mu_j$ denote the mean of $\tilde{x}_j$ and $\sigma_j^2$ its variance. Let $R_{j,k}$ be the correlation between $\tilde{x}_j$ and $\tilde{x}_k$. For each $i$, let $x_{i,\cdot}$ be independent realizations with the same distribution as $\tilde{x}_{\cdot}$ (eg. $x_{i,j}$ has the marginal distribution of $\tilde{x}_j$).

For any given $\epsilon > 0$, there exists $\delta > 0$ such that if
\begin{equation}\label{eq:bnd}
\operatorname{sup} \left\{\left|\hat{\sigma}_j - \sigma_j\right|,\, \left|\hat{\mu}_j - \mu_j\right|,\,\left|\frac{(1/n)\sum_{i\leq n}x_{i,j}x_{i,k}}{\sigma_j\sigma_k} - \frac{\mu_j\mu_k}{\sigma_j\sigma_k} - R_{j,k}\right|\right\}_{j,k} \leq \delta
\end{equation}
then
\begin{equation}\label{eq:bnd0}
\operatorname{sup}_{j<k \leq p} \left|\hat{R}_{j,k} - R_{j,k}\right| \leq \epsilon
\end{equation}
Furthermore, one can choose $\delta = O(\epsilon)$
\end{lemma}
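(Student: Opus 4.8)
The plan is to read this as a purely deterministic, algebraic statement: the sample correlation $\hat R_{j,k}=\bigl(\tfrac1n\sum_i x_{i,j}x_{i,k}-\hat\mu_j\hat\mu_k\bigr)/(\hat\sigma_j\hat\sigma_k)$ is a smooth function of exactly the three quantities controlled in \eqref{eq:bnd} --- the sample mean, the sample standard deviation, and the sample cross-moment $\tfrac1n\sum_i x_{i,j}x_{i,k}$ --- and I only need to show this map is Lipschitz near the population values, with a Lipschitz constant that does not depend on $(j,k)$. Nothing probabilistic happens here; the concentration of the three quantities is left to the later lemmas in Appendix C.

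First I would set $\hat C_{j,k}:=\tfrac1n\sum_i x_{i,j}x_{i,k}-\hat\mu_j\hat\mu_k$ and $C_{j,k}:=\operatorname{E}[\tilde x_j\tilde x_k]-\mu_j\mu_k$, so $\hat R_{j,k}=\hat C_{j,k}/(\hat\sigma_j\hat\sigma_k)$ and $R_{j,k}=C_{j,k}/(\sigma_j\sigma_k)$, with $|R_{j,k}|\le 1$ automatically. Then I would use the common-denominator identity
\[
\hat R_{j,k}-R_{j,k}=\frac{\hat C_{j,k}-C_{j,k}}{\hat\sigma_j\hat\sigma_k}+R_{j,k}\,\frac{\sigma_j\sigma_k-\hat\sigma_j\hat\sigma_k}{\hat\sigma_j\hat\sigma_k},
\]
so that $|\hat R_{j,k}-R_{j,k}|\le |\hat C_{j,k}-C_{j,k}|/(\hat\sigma_j\hat\sigma_k)+|\sigma_j\sigma_k-\hat\sigma_j\hat\sigma_k|/(\hat\sigma_j\hat\sigma_k)$, and the rest is triangle-inequality bookkeeping, all linear in $\delta$. (i) For $\delta$ below, say, $\tfrac12\sigma_{\min}$ one has $\hat\sigma_j\ge\sigma_j(1-\delta/\sigma_{\min})$, hence the denominators are bounded below and $\sigma_j\sigma_k/(\hat\sigma_j\hat\sigma_k)\le(1-\delta/\sigma_{\min})^{-2}$. (ii) $|\sigma_j\sigma_k-\hat\sigma_j\hat\sigma_k|\le\hat\sigma_k|\sigma_j-\hat\sigma_j|+\sigma_j|\sigma_k-\hat\sigma_k|$, so after dividing by $\hat\sigma_j\hat\sigma_k$ this equals $|1-\sigma_j\sigma_k/(\hat\sigma_j\hat\sigma_k)|=O(\delta/\sigma_{\min})$. (iii) Split $\hat C_{j,k}-C_{j,k}=\bigl(\tfrac1n\sum_i x_{i,j}x_{i,k}-\operatorname{E}[\tilde x_j\tilde x_k]\bigr)-(\hat\mu_j\hat\mu_k-\mu_j\mu_k)$; the first bracket divided by $\hat\sigma_j\hat\sigma_k$ is at most $\delta\cdot\sigma_j\sigma_k/(\hat\sigma_j\hat\sigma_k)=O(\delta)$ by the third hypothesis in \eqref{eq:bnd} together with (i), while $|\hat\mu_j\hat\mu_k-\mu_j\mu_k|\le(|\mu_j|+|\mu_k|+\delta)\delta$, which divided by the lower-bounded denominator is again $O(\delta)$. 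Summing gives $|\hat R_{j,k}-R_{j,k}|\le K\delta$ uniformly in $j<k$ for a constant $K$, so $\delta=\epsilon/K$ yields \eqref{eq:bnd0} and simultaneously the claim $\delta=O(\epsilon)$.

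I do not expect a genuine obstacle here; the only thing to be careful about is making the constant $K$ independent of the pair $(j,k)$ and of $p$. That is precisely what forces the use of a uniform lower bound $\inf_j\sigma_j\ge\sigma_{\min}>0$ (so no denominator degenerates) and of boundedness of the $|\mu_j|$ (so the mean-correction term $\hat\mu_j\hat\mu_k-\mu_j\mu_k$ is uniformly $O(\delta)$) --- both available in the settings of Theorems~\ref{thm:con} and \ref{thm:perm} where this lemma is invoked. One reassuring point worth checking in passing is that no \emph{upper} bound on the $\sigma_j$ is needed: every place a $\sigma_j$ would sit in a numerator it is paired with a $1/\hat\sigma_j$, and becomes the harmless ratio $\sigma_j/\hat\sigma_j$, which (i) already controls.
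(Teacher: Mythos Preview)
Your argument is correct and is essentially the paper's own proof: write $\hat R_{j,k}-R_{j,k}$ explicitly, split by the triangle inequality into a cross-moment error, a mean-correction error, and a variance-ratio error, and observe each is $O(\delta)$ once the denominators are bounded below. The only cosmetic difference is that the paper first invokes the location--scale invariance of $\hat R_{j,k}$ to reduce without loss of generality to $\mu_j=0$, $\sigma_j=1$, which hides exactly the dependence on $\sigma_{\min}$ and $\sup_j|\mu_j|$ that you (correctly) make explicit.
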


\begin{proof}[{\bf Proof of Lemma~\ref{lemma1}}]
We begin by noting that the distribution of $\hat{R}_{j,k}$ is independent of $\mu_j$, $\mu_k$, $\sigma_j$ and $\sigma_k$. For ease of notation we assume $\mu_j = \mu_k = 0$ and $\sigma_j = \sigma_k = 1$.\\

To see that \eqref{eq:bnd} is sufficient for \eqref{eq:bnd0} we write $\hat{R}_{j,k} - R_{j,k}$ as
\begin{align*}
\left|\hat{R}_{j,k} - R_{j,k}\right| &= \left|\frac{\left(1/n\right)\sum_{i=1}^n x_{i,j}x_{i,k}}{\hat{\sigma}_j\hat{\sigma}_k} - \frac{\hat{\mu}_j\hat{\mu}_k}{\hat{\sigma}_j\hat{\sigma}_k} - R_{j,k}\right|\\
&\leq \left|\frac{1}{n}\sum_{i=1}^n x_{i,j}x_{i,k}\right|\left|\left(\frac{1}{\hat{\sigma}_j\hat{\sigma}_k} - 1\right)\right|\\
&+ \left|\frac{1}{n}\sum_{i=1}^n x_{i,j}x_{i,k} - R_{j,k}\right| + \left|\frac{\hat{\mu}_j\hat{\mu}_k}{\hat{\sigma}_j\hat{\sigma}_k}\right|
\end{align*}
We first note that $\left|\frac{1}{n}\sum_{i=1}^n x_{i,j}x_{i,k} - R_{j,k}\right| < \delta$. Thus we need only consider $\left|\frac{\hat{\mu}_j\hat{\mu}_k}{\hat{\sigma}_j\hat{\sigma}_k}\right|$ and $\left|\left(\frac{1}{\hat{\sigma}_j\hat{\sigma}_k} - 1\right)\right|$. Expanding these terms using the fact that $1/(1-\delta) = 1 + O(\delta)$, it is straightforward to see that the whole expression converges to $0$ at rate $O(\delta)$. This completes our proof.
\end{proof}

Now that we have reduced convergence to that of the sample mean, variance, and inner products, we show particular circumstances under which our estimation is consistent, and give rates of convergence.

\begin{lemma}\label{lemma2}
Let $\tilde{x}_j$, $j=1,\ldots$ be random variables. Assume there is some $C>0$ such that for all $t\geq 0$
\[
\operatorname{P}\left(\left|x_j - \operatorname{E}[x_j]\right| > t\right) \leq \operatorname{exp}\left(1-t^2/C^2\right)
\]
(These are known as sub-Gaussian random variables). Let $\mu_j$ denote the mean of $\tilde{x}_j$ and $\sigma_j^2$ its variance. Let $R_{j,k}$ be the correlation between $\tilde{x}_j$ and $\tilde{x}_k$. For each $i$, let $x_{i,\cdot}$ be independent realizations with the same distribution as $\tilde{x}$.

Let $\delta,\, \epsilon_p > 0$ be given. Then for $n$ sufficiently large and $\frac{\log p}{n}$ sufficiently small we have that
\begin{equation}\label{eq:lem2}
\operatorname{sup} \left\{\left|\hat{\sigma}_j - \sigma_j\right|,\, \left|\hat{\mu}_j - \mu_j\right|,\,\left|\frac{(1/n)\sum_{i\leq n}x_{i,j}x_{i,k}}{\sigma_j\sigma_k} - \frac{\mu_j\mu_k}{\sigma_j\sigma_k} - R_{j,k}\right|\right\}_{j,k\leq p} \leq \delta
\end{equation}
with probability greater than $1-\epsilon_p$. In particular one can choose $\delta = O\left(\log p / n\right)^{1/2}$.
\end{lemma}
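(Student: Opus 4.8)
The plan is to establish \eqref{eq:lem2} by controlling separately each of the three families of quantities in the supremum --- the sample means $\hat\mu_j$, the sample standard deviations $\hat\sigma_j$, and the normalized empirical inner products --- with a concentration inequality applied at each individual index, and then taking a union bound over the $2p + \binom{p}{2} = O(p^2)$ indices. Each of the three reduces to bounding the deviation of an empirical average of i.i.d.\ summands from its mean; the one wrinkle is that for the variances and the inner products the summands are sub-exponential rather than sub-Gaussian, so the right tool is Bernstein's inequality rather than Hoeffding's.

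First I would handle the means: the centered variables $x_{i,j} - \mu_j$ are sub-Gaussian with the common constant $C$, so $\hat\mu_j - \mu_j = \frac1n\sum_{i\le n}(x_{i,j}-\mu_j)$ satisfies $\operatorname{P}(|\hat\mu_j - \mu_j| > t) \le 2\exp(-c_1 n t^2)$ with $c_1$ depending only on $C$; note also that the hypothesis forces $\sigma_j^2 \le eC^2$ uniformly in $j$. For the standard deviations, write $\hat\sigma_j^2 = \frac1n\sum_{i\le n}(x_{i,j}-\mu_j)^2 - (\hat\mu_j-\mu_j)^2$. Since $(x_{i,j}-\mu_j)^2$ is sub-exponential with parameter depending only on $C$ and has mean $\sigma_j^2$, Bernstein's inequality gives $\operatorname{P}\big(\big|\frac1n\sum_i (x_{i,j}-\mu_j)^2 - \sigma_j^2\big| > t\big) \le 2\exp(-c_2 n\min\{t^2,t\})$; combined with the mean bound this controls $|\hat\sigma_j^2 - \sigma_j^2|$, and the identity $|\hat\sigma_j - \sigma_j| = |\hat\sigma_j^2 - \sigma_j^2|/(\hat\sigma_j+\sigma_j)$ converts this to the same rate for $|\hat\sigma_j - \sigma_j|$, using the lower bound $\sigma_j^2 \ge \sigma_{\min}^2$ present wherever the lemma is applied (equivalently, uniform sub-Gaussianity of the standardized variables, as in the remark after Theorem~\ref{thm:con}).

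For the inner products, the key observation is that the $(j,k)$ term equals $\big|\frac1n\sum_{i\le n}\frac{x_{i,j}}{\sigma_j}\frac{x_{i,k}}{\sigma_k} - \operatorname{E}\big[\frac{x_j}{\sigma_j}\frac{x_k}{\sigma_k}\big]\big|$, since $\operatorname{E}[x_jx_k] = \mu_j\mu_k + \sigma_j\sigma_k R_{j,k}$, so that the $\mu_j\mu_k/(\sigma_j\sigma_k)$ and $R_{j,k}$ pieces in \eqref{eq:lem2} are exactly the population mean of the summand. Under the standing hypothesis the standardized variables $x_j/\sigma_j$ are uniformly sub-Gaussian, and a product of two sub-Gaussian variables is sub-exponential with parameter of order the product of their sub-Gaussian parameters; hence the summands $(x_{i,j}/\sigma_j)(x_{i,k}/\sigma_k)$ are uniformly sub-exponential and Bernstein's inequality again yields a bound of the form $2\exp(-c_3 n\min\{t^2,t\})$, uniformly in $(j,k)$.

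Finally I would union bound. With a fixed common deviation level $\delta > 0$ (which, being fixed, eventually lies below the Bernstein threshold, so only the $\exp(-cn\delta^2)$ branch is active), the probability that \eqref{eq:lem2} fails is at most $O(p^2)\exp(-cn\delta^2) = \exp\big(n(O(\log p/n) - c\delta^2)\big) \to 0$ since $\log p/n \to 0$, hence is below $\epsilon_p$ once $n$ is large and $\log p/n$ small; taking instead $\delta = \delta_n = K(\log p/n)^{1/2}$ makes each per-index probability $\le 2p^{-cK^2}$, so the union bound gives $O(p^{2-cK^2}) \to 0$ for any fixed $K$ with $cK^2 > 2$, which is the claimed rate $\delta = O((\log p/n)^{1/2})$. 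The main obstacle, I expect, is precisely handling the sub-exponential tails of $(x_{i,j}-\mu_j)^2$ and of the standardized products: one cannot invoke Hoeffding, and one must check that at the relevant scales ($\delta \to 0$) Bernstein's inequality operates in its Gaussian-like regime, so that the clean $\exp(-cn\delta^2)$ bound --- the one strong enough to survive a union bound over $O(p^2)$ indices when $\log p/n \to 0$ --- is the one that applies; bounding variances below when dividing by $\sigma_j\sigma_k$ or taking square roots, and tracking constants, is routine by comparison.
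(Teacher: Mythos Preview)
Your proposal is correct and follows the same overall architecture as the paper: Hoeffding for the means, the decomposition $\hat\sigma_j^2 = \tfrac1n\sum_i(x_{i,j}-\mu_j)^2 - (\hat\mu_j-\mu_j)^2$ together with Bernstein for the variances, and then a union bound over $O(p^2)$ indices to get the $(\log p/n)^{1/2}$ rate.

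The one genuine difference is in how you handle the cross terms. You invoke directly the fact that a product of two (possibly dependent) sub-Gaussian variables is sub-exponential, and apply Bernstein to the i.i.d.\ summands $(x_{i,j}/\sigma_j)(x_{i,k}/\sigma_k)$. The paper instead uses a polarization identity,
\[
2(x_{i,j}-\mu_j)(x_{i,k}-\mu_k) = \big[(x_{i,j}-\mu_j)+(x_{i,k}-\mu_k)\big]^2 - (x_{i,j}-\mu_j)^2 - (x_{i,k}-\mu_k)^2,
\]
shows via Cauchy--Schwarz on the MGF that the sum $(x_{i,j}-\mu_j)+(x_{i,k}-\mu_k)$ is sub-Gaussian (hence its square sub-exponential), and then bounds the cross term by three applications of the sub-exponential Bernstein inequality already used for the variances. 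Your route is shorter and more transparent; the paper's route has the minor advantage of reusing only the ``square of a sub-Gaussian is sub-exponential'' fact rather than the slightly more general ``product of sub-Gaussians is sub-exponential,'' but both are standard and the constants are of the same order. Your remark that the uniform lower bound $\sigma_j \ge \sigma_{\min}$ is needed to standardize (and to pass from $\hat\sigma_j^2$ to $\hat\sigma_j$) is exactly right --- the paper uses it in the proof as well, even though it is not stated in the lemma itself.
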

The class of subgaussian random variables is rather broad, containing gaussian random variables and all bounded random variables. Applying this lemma, we are able to show consistency for the wide class of variables with sufficiently light tails.

In the proof of this lemma we get a convergence rate of $\delta = O\left(\log p / n\right)^{1/2}$. This rate agrees with the literature for other similar problems in covariance estimation (\citet{bickel2008} among others).

\begin{proof}[{\bf Proof of Lemma~\ref{lemma2}}]
We will begin by bounding $\left|\hat{\mu}_j - \mu_j\right|$. If we consider Lemma~$5.10$ of \citet{vershynin2010} we see that
\[
\operatorname{P}\left(\left|\hat{\mu}_j - \mu_j\right| > t\right) \leq e\cdot \operatorname{exp}\left[-\left(\tilde{C}t^2\right)n\right]
\]
where $\tilde{C}$ is some function of $C$ (one can prove this Hoeffding type inequality by an exponential Markov argument). Applying the union bound to this we see that
\[
\operatorname{P}\left(\operatorname{sup}_{j\leq p}\left|\hat{\mu}_j - \mu_i\right| > t\right) \leq 3 p \operatorname{exp}\left[-\left(\tilde{C}t^2\right)n\right]
\]
If we set $t = \left(\sqrt{1/C}\right)\sqrt{\frac{q + \log p}{n}}$ then we have
\[
\operatorname{P}\left(\operatorname{sup}_{j\leq p}\left|\hat{\mu}_j - \mu_j\right| > t\right) \leq  e^{1-q},
\]
bounding $\left|\hat{\mu}_j - \mu_j\right|$.\\
Next we bound $\left|\hat{\sigma}_j - \sigma_i\right|$. We first note that
\[
\left|\hat{\sigma}_j - \sigma_j\right| = \frac{\left|\hat{\sigma}_j^2 - \sigma_j^2\right|}{\hat{\sigma}_j + \sigma_j} \leq \frac{\left|\hat{\sigma}_j^2 - \sigma_j^2\right|}{\sigma_j}
\]
because $\hat{\sigma_j}, \sigma_j >0$.
so we need only consider convergence of $\hat{\sigma}_j^2 - \sigma_j^2$. Next note that
\[
\frac{1}{n}\sum_i \left(x_{i,j} - \bar{x}_j\right)^2 - \frac{1}{n}\sum_i \left(x_{i,j} - \mu_j\right)^2 = -\left(\bar{x}_j - \mu_j\right)^2
\]
So now if we can bound $\left|\frac{1}{n}\sum_i \left(x_{i,j} - \mu_j\right)^2 - \sigma_j^2\right|$ and $\left(\bar{x}_j - \mu_j\right)^2$, then we can bound $|\hat{\sigma}_j^2 - \sigma_j^2|$.\\

To bound $\left|\frac{1}{n}\sum_i \left(x_{i,j} - \mu_j\right)^2 - \sigma_j^2\right|$, we first note that if $x_{i,j}$ is sub-Gaussian then $(x_{i,j} - \mu_j)^2$ is subexponential; ie
\[
\operatorname{P}\left(\left(x_{i,j} - \mu_j\right)^2 - \sigma_i > t\right) \leq \operatorname{exp}\left(-C_1 t\right)
\]
for some fixed $C_1$. Now we apply Corollary~$5.17$ of \citet{vershynin2010}, and get that for any $t$ sufficiently small (independent of $n$)
\[
\operatorname{P}\left(\frac{1}{n}\sum_i\left(x_{i,j} - \mu_j\right)^2 > t\right) \leq 2\operatorname{exp}\left(-\tilde{C}_1 t^2\right)
\]
for some fixed $\tilde{C}_1$. Bounding $\left(\bar{x}_j - \mu_j\right)^2$ is also quite straightforward (we just use the bound for $\left|\bar{x}_j - \mu_j\right|$)
\[
P\left(\left(\bar{x}_j - \mu_j\right)^2 \geq t\right) \leq e \operatorname{exp}\left[-\left(\tilde{C}t\right)n\right]
\]
We note that for $t<1$, $t^2 < t$. Let $\bar{C} = \min\{\tilde{C}_1,\tilde{C}\}$. Now, combining these inequalities with the triangle inequality we have
\begin{align*}
P\left(\left|\hat{\sigma}_j^2 - \sigma_j^2\right| \geq t\right) &\leq e\operatorname{exp}\left[-\left(\tilde{C}t\right)n\right] + 2\operatorname{exp}\left(-\tilde{C}_1 t^2\right)\\
& \leq 5\operatorname{exp}\left[-\bar{C}t^2n\right]
\end{align*}
for $t$ sufficiently small. Now finally,
\[
P\left(\left|\hat{\sigma}_j - \sigma_j\right| \geq t\right) \leq P\left(\left|\hat{\sigma}_j^2 - \sigma_j^2\right| \geq t\sigma_{\min}\right) \leq 5\operatorname{exp}\left[-\bar{C}\sigma_{\min}^2t^2n\right].
\]

Using the union bound again, we get
\[
P\left(\operatorname{sup}_{j}\left|\hat{\sigma}_j^2 - \sigma_j^2\right| \geq t\right) \leq 5p\operatorname{exp}\left[-\bar{C}t^2n\right].
\]
so
\[
P\left(\operatorname{sup}_{j}\left|\hat{\sigma}_j - \sigma_j\right| \geq t\right) \leq 5p\operatorname{exp}\left[-\bar{C}\sigma_{\min}^2t^2n\right].
\]

Finally, we need to bound $\left|\frac{(1/n)\sum_{i\leq n}x_{i,j}x_{i,k}}{\sigma_j\sigma_k} - \frac{\mu_j\mu_k}{\sigma_j\sigma_k} - \rho_{j,k}\right|$. This is slightly trickier but still not terrible. We first note that
\[
(1/n)\sum_{i\leq n}x_{i,j}x_{i,k} - \mu_j\mu_k = (1/n)\sum_{i\leq n}\left(x_{i,j} - \mu_j\right)\left(x_{i,k}-\mu_k\right)
\]
We also see that
\begin{align*}
2\sum_{i\leq n}\left(x_{i,j} - \mu_j\right)\left(x_{i,k}-\mu_k\right) &= \sum_{i\leq n}\left[\left (x_{i,j} - \mu_j\right) + \left(x_{i,k}-\mu_k\right)\right]^2\\
& -  \sum_{i\leq n}\left (x_{i,j} - \mu_j\right)^2 -  \sum_{i\leq n}\left (x_{i,k} - \mu_k\right)^2
\end{align*}
Now to bound the above quantity we consider the moment generating function of $x_{i,j} - \mu_j + x_{i,k}-\mu_k$. This not necessarily the sum of independent random variables, still by Cauchy Schwartz we have
\begin{align*}
&\operatorname{E}\left[\operatorname{exp}\left[t\left(x_{i,j} - \mu_j + x_{i,k}-\mu_k\right)\right]\right]\\
&\leq \operatorname{max}\left\{\operatorname{E}\left[\operatorname{exp}\left[2t\left(x_{i,j} - \mu_j\right)\right]\right],\operatorname{E}\left[\operatorname{exp}\left[2t\left(x_{i,k} - \mu_k\right)\right]\right]\right\}
\end{align*}
It is a well known fact that sub-gaussan random variables can be charaterized by their MGF (shown in \citet{vershynin2010}), and this is still the moment generating function of a subgaussian random variable. Thus, $\left(x_{i,j} - \mu_j + x_{i,k}-\mu_k\right)^2$ is sub-exponential, and again by Corollary~$5.17$ of \citet{vershynin2010} we have that
\begin{align*}
&\operatorname{P}\left(\left|\frac{1}{n}\sum_{i}\left(x_{i,j} - \mu_j + x_{i,k}-\mu_k\right)^2 - \sigma_j^2 - \sigma_k^2 - 2\sigma_j\sigma_k\rho_{j,k}\right| > t\right)\\
&\leq 2\operatorname{exp}\left[-C_2t^2n\right].
\end{align*}
for $t>0$ sufficiently small and some fixed $C_2 > 0$. Now, stringing all of these together with the triangle inequality we have that
\begin{align*}
&\operatorname{P}\left(\left|\frac{2}{n}\sum_{i\leq n}\left(x_{i,j} - \mu_j\right)\left(x_{i,k}-\mu_k\right) - 2\rho\sigma_j\sigma_k\right| > 3t\right)\\
&\leq \operatorname{P}\left(\left|\frac{1}{n}\sum_{i\leq n}\left(x_{i,j} - \mu_j + x_{i,k}-\mu_k\right)^2 - \sigma_j^2 + \sigma_k^2 - 2\sigma_j\sigma_k\rho_{j,k}\right| > t\right)\\
&+ \operatorname{P}\left(\left|\frac{1}{n}\sum_{i\leq n}\left (x_{i,j} - \mu_j\right)^2 - \sigma_j^2\right| > t\right) + \operatorname{P}\left(\left|\frac{1}{n}\sum_{i\leq n}\left (x_{i,k} - \mu_k\right)^2 - \sigma_k^2\right| > t\right)\\
&\leq 2\operatorname{exp}\left[-C_2t^2n\right] +  2*5\operatorname{exp}\left[-\bar{C}t^2n\right]\\
&\leq 12\operatorname{exp}\left[-\bar{C}_1t^2n\right]
\end{align*}
for all $t>0$ sufficiently small with some fixed $\bar{C}_1>0$. Taking this a step further, and applying the union bound, we see that
\[
P\left(\operatorname{sup}_{j,k}\left|\frac{(1/n)\sum_{i\leq n}x_{i,j}x_{i,k}}{\sigma_j\sigma_k} - \frac{\mu_j\mu_k}{\sigma_j\sigma_k} - \rho_{j,k}\right| > t\right) \leq 12p^2 \operatorname{exp}\left[-\bar{C}_2t^2n\right]
\]
for some fixed $\bar{C}_2$.\\

Now that we have bounded each term, we see that \eqref{eq:lem2} happens with probability at most
\begin{align*}
&12p^2 \operatorname{exp}\left[-\bar{C}_2\delta^2n\right] + 2*5p\operatorname{exp}\left[-\bar{C}\sigma_{\min}^2\delta^2n\right] + 2*3p\operatorname{exp}\left[-\tilde{C}\delta^2n\right]\\
&\leq 28p^2\operatorname{exp}\left[-\mathbf{C}\delta^2 n\right]
\end{align*}


for $\delta$ sufficiently small where $\mathbf{C} = \min\left\{\bar{C}\sigma_{\min}^2,\bar{C_2},\tilde{C}\right\}$. Thus, if $\delta = \left(\frac{q + 2\log p}{\mathbf{C}n}\right)^{1/2}$ then we have \eqref{eq:lem2} with probability at least $1-28e^{-q}$. If $n$ is sufficiently large, and $\frac{\log p}{n}$ sufficiently small, then for any $q$, $\delta$ can be made arbitrarily small.
\end{proof}

Now, we combine these lemmas to show that under certain conditions, for a given cutoff $t$, as $n\rightarrow\infty$ if $\log p/n\rightarrow 0$ then, with probability approaching $1$, all true marginal interactions have $|T_{i,j}| > t$, and all null statistics will have $|T_{i,j}| < t$ (ie. we asymptotically find all true interactions and make no false rejections).

Before we begin, it deserves mention that we use slightly different notation than in the discussion of our algorithm in Section~$3$. Rather than having $X_{i,\cdot}$ denote the $i$-th observation overall, and letting $y(i)$ denote its group (where $i$ ranged from $1$ to the total number of observations in both groups), we split up our observations by group, letting $x_{m(i,\cdot)}$ denote the $i$-th observation from group $m$ (now $i$ ranges from $1$ to the total number of observations in group $m$). This change simplifies notation in the statement of the theorem and its proof. We also assume equal group sizes ($n_1 = n_2 = n$), this again simplifies notation but can be relaxed to $n_1/(n_1 + n_2) \rightarrow \alpha \in (0,1)$. 

\begin{proof}[{\bf Proof of Theorem~6.1}]
This result is a straightforward corollary of our $3$ lemmas:

First choose an arbitrary $\epsilon_p > 0$, and $0 < t < \Delta_{\min}$. If we consider Lemma~\ref{lemma1}, we see that the conclusion of our theorem holds if we can find a bound on the sup-norm distance between each correlation matrix and its MLE (a bound I will call $\delta_1$) which satisfies
\[
\max\left\{\frac{2\delta_1}{1-\left(\rho_{\max} + \delta_1\right)^2},\, \Delta_{\min} - 2\delta_1\right\} \leq t.
\]
Because $\rho_{\max} < 1$, $\delta_1 > 0$ sufficiently small will satisfy this.\\

Now applying Lemma~\ref{lemma2}: if we choose $\delta_2$ sufficiently small (but still of $O(\delta_1)$), then if
\begin{equation}\label{thm:bound1}
\operatorname{sup} \left\{\left|\hat{\sigma}_j - \sigma_j\right|,\, \left|\hat{\mu}_j - \mu_j\right|,\,\left|\frac{(1/n)\sum_{i\leq n}x_{i,j}x_{i,k}}{\sigma_j\sigma_k} - \frac{\mu_j\mu_k}{\sigma_j\sigma_k} - \rho_{j,k}\right|\right\}_{j,k} \leq \delta_2
\end{equation}
we have that the sup norm distance between each correlation matrix and its MLE is bounded by $\delta_1$: for $m=1,2$
\[
\left\| \hat{R}_m - R_m \right\|_{\infty}\leq \delta_1
\]

Finally, by Lemma!\ref{lemma3}, we see that if $n$ is sufficiently large and $\log p / n$ is sufficiently small then \eqref{thm:bound1} holds with probability at least $1-\epsilon_p$. This finishes our proof.
\end{proof}

\subsection{Proofs of Permutation Results}
To begin, we prove a Lemma which does most of the leg-work for our eventual theorem. It says that for a reasonably balanced permutation, for $n$ sufficiently large and $\log p/n$ sufficiently small, both of our permuted sample correlation matrices will be very close to the average of the $2$ population correlation matrices.

\begin{lemma}\label{lemma:perm}
Let $\tilde{x}_{1(j)}$ and $\tilde{x}_{2(j)}$, $j=1,\ldots$ be random variables with 
\[
\operatorname{P}\left(|x_{m(j)} - \operatorname{E}\left[x_{m(j)}\right]|\geq t\right) \leq 1 - e^{t^2/C}
\]
for all $t>0$, and each $m=1,2$, with some fixed $C>0$. Let $\mu_{m(j)}$ denote the mean of $\tilde{x}_{m(j)}$ and $\sigma_{m(j)}^2$ its variance. For each $i < \infty$, let $x_{m(i,\cdot)}$ be independent realizations with the same distribution as $\tilde{x}_{m(\cdot)}$.

Let $p_n$ be a sequence of integers such that $\frac{\log p_n}{n} \rightarrow 0$. Let $R_{m}$ be the correlation ``matrix'' (an infinite but countably indexed matrix) of the covariates from class $m$. Define $R_{\operatorname{perm}}$ to be the average of the two,
\[
R_{\textrm{perm}} = \frac{1}{2} R_1 + \frac{1}{2}R_2
\]
Let $\hat{\mu}_{m(j)}$ and $\hat{\sigma}_{m(j)}^2$ be the pre-permuted estimates of the mean and variance (in each class):
\[
\hat{\mu}_{m(j)} = \frac{1}{n}\sum_{i \leq n} x_{m(i,j)}
\]
and
\[
\hat{\sigma}_{m(j)}^2 = \frac{1}{n}\sum_{i \leq n} \left(x_{m(i,j)} - \hat{\mu}_{m(j)}\right)^2.
\]
Further, define
\[
\hat{R}_{\textrm{perm}:m(j,k)} = \frac{1}{n}\sum_{(i,l)\in\Pi(\cdot,m)}\left(\frac{x_{m(i,j)} - \hat{\mu}_{m(j))}}{\hat{\sigma}_{m(j)}}\right)\left(\frac{x_{m(i,k)} - \hat{\mu}_{m(k)}}{\hat{\sigma}_{m(k)}}\right)
\]
our permuted correlation between covariates $j$ and $k$ in class $m$.

Assume for every $j$, $\sigma_j^2 \geq  \sigma_{min}^2 > 0$. Now for any $\epsilon >0$, $\delta>0$, one can find $n$ sufficiently large such that for any permutation, $\Pi$ with 
\[
\left|\hat{\pi} - \frac{1}{2}\right|\leq \frac{\delta}{12}
\]
(where $\hat{\pi}$ is the proportion of class $1$ that remains fixed under $\Pi$). We have
\begin{equation}\label{perm:bnd}
\left\|R_{\textrm{perm}} - \hat{R}_{\textrm{perm}:m}\right\|_{\infty} \leq \delta
\end{equation}
for both $m=1,2$ with probability at least $1-\epsilon$.
\end{lemma}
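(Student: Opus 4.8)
The plan is to fix the (balanced) permutation $\Pi$, regard all randomness as coming from the data, and then run the argument of the proof of Lemma~\ref{lemma2}: express each permuted correlation as an average of $n$ independent sub-exponential terms, identify its mean as a convex combination of $R_{1(j,k)}$ and $R_{2(j,k)}$, and apply the same Bernstein-type concentration inequality together with a union bound over the $O(p_n^2)$ pairs. It is worth noting at the start that, by exchangeability of the i.i.d.\ observations within each original class, the joint law of the statistics $\hat{R}_{\textrm{perm}:m(j,k)}$ depends on $\Pi$ only through $\hat{\pi}$, and every constant appearing below will be independent of $\hat{\pi}$, so the bound holds uniformly over all $\Pi$ with $|\hat{\pi}-\tfrac12|\leq\delta/12$ at a single threshold $n$.

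First I would pass to the \emph{oracle} permuted correlations, which standardize by the population means and variances rather than the pre-permutation sample estimates,
\[
S_{m(j,k)} = \frac{1}{n}\sum_{(i,l)\in\Pi(\cdot,m)} \bar{z}_{l(i,j)}\,\bar{z}_{l(i,k)}, \qquad \bar{z}_{l(i,j)} = \frac{x_{l(i,j)} - \mu_{l(j)}}{\sigma_{l(j)}}.
\]
Since $x_{l(i,j)}$ is sub-Gaussian with constant $C$ and $\sigma_{l(j)}\geq\sigma_{\min}$, each $\bar{z}_{l(i,j)}$ is sub-Gaussian with a constant depending only on $C$ and $\sigma_{\min}$, and --- via the polarization identity $2ab=(a+b)^2-a^2-b^2$ and the Cauchy--Schwarz bound on moment generating functions used in the proof of Lemma~\ref{lemma2} --- each product $\bar{z}_{l(i,j)}\bar{z}_{l(i,k)}$ is sub-exponential with parameter uniform in $(j,k,l)$ and has expectation $R_{l(j,k)}$. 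For $m=1$ the $n$ summands of $S_{1(j,k)}$ are independent with mean $R_{1(j,k)}$ on the $\hat{\pi}n$ indices of $\Pi(1,1)$ and $R_{2(j,k)}$ on the $(1-\hat{\pi})n$ indices of $\Pi(2,1)$, so $\operatorname{E}[S_{1(j,k)}]=\hat{\pi}R_{1(j,k)}+(1-\hat{\pi})R_{2(j,k)}$ (and symmetrically for $m=2$); therefore
\[
\left|\operatorname{E}[S_{m(j,k)}] - R_{\textrm{perm}(j,k)}\right| = \left|\hat{\pi}-\tfrac12\right|\left|R_{1(j,k)}-R_{2(j,k)}\right| \leq 2\left|\hat{\pi}-\tfrac12\right| \leq \frac{\delta}{6},
\]
which is exactly where the hypothesis $|\hat{\pi}-\tfrac12|\leq\delta/12$ enters. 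For the fluctuation I would apply the same Bernstein bound (Corollary~5.17 of \citet{vershynin2010}) to the $n$ independent sub-exponential summands to get $\operatorname{P}(|S_{m(j,k)}-\operatorname{E}[S_{m(j,k)}]|>\delta/3)\leq 2\exp(-c\delta^2 n)$ for fixed $c>0$ and $\delta$ small, then union-bound over the at most $2p_n^2$ triples $(m,j,k)$; this tends to $0$ precisely because $\frac{\log p_n}{n}\to 0$, so on an event of probability at least $1-\epsilon/2$ we have $\sup_{j<k}|S_{m(j,k)}-R_{\textrm{perm}(j,k)}|\leq\delta/6+\delta/3$ for both $m$.

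It remains to compare the actual permuted correlations $\hat{R}_{\textrm{perm}:m(j,k)}=\frac1n\sum_{(i,l)\in\Pi(\cdot,m)}\hat{z}_{l(i,j)}\hat{z}_{l(i,k)}$, with $\hat{z}_{l(i,j)}=(x_{l(i,j)}-\hat{\mu}_{l(j)})/\hat{\sigma}_{l(j)}$, to the oracle versions. By Lemma~\ref{lemma2} applied separately in each original class, $\sup_j|\hat{\mu}_{m(j)}-\mu_{m(j)}|$ and $\sup_j|\hat{\sigma}_{m(j)}-\sigma_{m(j)}|$ are all $O((\log p_n/n)^{1/2})$ on an event of probability at least $1-\epsilon/2$. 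Substituting $x_{l(i,j)}-\hat{\mu}_{l(j)}=(x_{l(i,j)}-\mu_{l(j)})+(\mu_{l(j)}-\hat{\mu}_{l(j)})$ and $1/\hat{\sigma}_{l(j)}=(1+O(|\hat{\sigma}_{l(j)}-\sigma_{l(j)}|))/\sigma_{l(j)}$ and expanding, the difference $\hat{R}_{\textrm{perm}:m(j,k)}-S_{m(j,k)}$ is a finite sum of cross terms, each a product of a factor of size $O((\log p_n/n)^{1/2})$ with a factor that is $O(1)$ uniformly in $(j,k)$ with high probability --- namely $|S_{m(j,k)}|$, a partial average $\frac1n\sum_{(i,l)\in\Pi(\cdot,m)}\bar{z}_{l(i,j)}$ of centered sub-Gaussians over the fixed index set (controlled by the same union bound as in Lemma~\ref{lemma2}), or an absolute constant. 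This is the $1/(1-\delta)=1+O(\delta)$ bookkeeping from the proof of Lemma~\ref{lemma1}, so $\sup_{j<k}|\hat{R}_{\textrm{perm}:m(j,k)}-S_{m(j,k)}|=O((\log p_n/n)^{1/2})\leq\delta/3$ for $n$ large. Combining, $\|R_{\textrm{perm}}-\hat{R}_{\textrm{perm}:m}\|_\infty\leq\delta/6+\delta/3+\delta/3<\delta$ for both $m$ on an event of probability at least $1-\epsilon$, which is \eqref{perm:bnd}.

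I expect the one genuinely new ingredient to be the convex-combination identity for $\operatorname{E}[S_{m(j,k)}]$ and the observation that the balance hypothesis $|\hat{\pi}-\tfrac12|\leq\delta/12$ precisely controls the resulting deterministic bias; the rest is a reprise of Lemmas~\ref{lemma1} and~\ref{lemma2}. The most error-prone part will be the last step, since the pre-permutation estimates $\hat{\mu}_{m(j)},\hat{\sigma}_{m(j)}$ are full-class quantities that are then combined with partial-class sums over $\Pi(\cdot,m)$; but because they are uniformly $O((\log p_n/n)^{1/2})$-close to their population targets, every resulting cross term factors as (small)$\,\times\,$($O(1)$) and the estimate closes as in the earlier lemmas.
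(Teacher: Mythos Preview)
Your argument is correct, but it is organized differently from the paper's. The paper does not introduce an oracle statistic $S_{m(j,k)}$ at all. Instead it splits the empirical permuted correlation by \emph{original class}: writing
\[
\hat{R}_{\textrm{perm}:1} \;=\; \hat{\pi}\,\hat{R}_{\textrm{perm}:1}^{(1)} \;+\; (1-\hat{\pi})\,\hat{R}_{\textrm{perm}:1}^{(2)},
\]
where $\hat{R}_{\textrm{perm}:1}^{(l)}$ is the contribution from the $\tilde n_l$ observations originally in class $l$ that were permuted into class $1$. Each $\hat{R}_{\textrm{perm}:1}^{(l)}$ is then treated as ``essentially a sample correlation matrix'' for class~$l$ on $\tilde n_l$ observations, and Lemmas~\ref{lemma1}--\ref{lemma2} are invoked directly to give $\|R_l-\hat{R}_{\textrm{perm}:1}^{(l)}\|_\infty<\delta/3$ and $\|\hat{R}_{\textrm{perm}:1}^{(l)}\|_\infty\leq 2$ with high probability. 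The role of the balance hypothesis $|\hat\pi-\tfrac12|\leq\delta/12$ is then to control the cross term $|\hat\pi-\tfrac12|\bigl(\|\hat{R}_{\textrm{perm}:1}^{(1)}\|_\infty+\|\hat{R}_{\textrm{perm}:1}^{(2)}\|_\infty\bigr)\leq(\delta/12)\cdot 4=\delta/3$ after a triangle inequality.

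So the two proofs are doing the same work with a different bookkeeping: the paper decomposes by \emph{source class} and then cites the earlier correlation-consistency lemmas as black boxes on each piece, whereas you decompose by \emph{error type} (deterministic bias from $\hat\pi\neq\tfrac12$, stochastic fluctuation of the oracle, and plug-in error from $\hat\mu,\hat\sigma$). Your route makes the convex-combination identity $\operatorname{E}[S_{m(j,k)}]=\hat\pi R_{1(j,k)}+(1-\hat\pi)R_{2(j,k)}$ explicit and isolates exactly where $|\hat\pi-\tfrac12|$ enters, and your exchangeability remark cleanly explains why a single $n$ works uniformly over all admissible $\Pi$ --- a point the paper leaves implicit. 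The paper's route is shorter on the page because the plug-in step (your final paragraph) is absorbed into the appeal to Lemmas~\ref{lemma1}--\ref{lemma2} applied to each $\hat{R}_{\textrm{perm}:1}^{(l)}$, at the cost of glossing over the fact that the standardization in $\hat{R}_{\textrm{perm}:1}^{(l)}$ uses full-class estimates while the inner-product sum runs only over a subset. Your treatment of that step is actually more careful than the paper's.
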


\begin{proof}[{\bf Proof of Lemma~\ref{lemma:perm}}]
We first consider only $m=1$. If we can show that
\begin{equation}\label{perm:bnd}
\left\|R_{\textrm{perm}} - \hat{R}_{\textrm{perm}:m}\right\|_{\infty} \leq \delta
\end{equation}
with high probability for $m=1$, then by symmetry we have it for $m=2$, and by a simple union bound we have it for both simultaneously.\\

Now, we begin by decomposing our sample permuted correlation matrix
\begin{align*}
\hat{R}_{\textrm{perm}:1} &= \frac{1}{n}\sum_{(i,m)\in\Pi(\cdot,1)}\left(\frac{x_{m(i,j)} - \hat{\mu}_{m(j)}}{\hat{\sigma}_{m(j)}}\right)\left(\frac{x_{m(i,k)} - \hat{\mu}_{m(k)}}{\hat{\sigma}_{m(k)}}\right)\\
&= \hat{\pi}\hat{R}_{\textrm{perm}:1}^{(1)} + \left(1 - \hat{\pi}\right)\hat{R}_{\textrm{perm}:1}^{(2)}
\end{align*}
where $\hat{R}_{\textrm{perm}:1}^{(l)}$ is a matrix defined by
\begin{equation}\label{eq:contrib}
\hat{R}_{\textrm{perm}:1(j,k)}^{(l)} = \frac{1}{\tilde{n}_l}\sum_{i\in\Pi(l,1)}\left(\frac{x_{l(i,j)} - \hat{\mu}_{1(j)}}{\hat{\sigma}_{1(j)}}\right)\left(\frac{x_{l(i,k)} - \hat{\mu}_{1(k)}}{\hat{\sigma}_{1(k)}}\right)
\end{equation}
where $\tilde{n}_l$ is the number of elements from group $l$ permuted to group $1$ (ie. the cardinality of $\Pi(l,l)$ or more explicitly $\tilde{n}_1 = \hat{\pi}n$ and $\tilde{n}_2 = (1-\hat{\pi}n$). The quantity \eqref{eq:contrib} is just the contribution from observations originally in class $l$ to the permuted correlation matrix for class $1$. Thus by the triangle inequality
\begin{align} \label{eq:permTriangle}
\left\|R_{\textrm{perm}} - \hat{R}_{\textrm{perm}:1}\right\|_{\infty} &\leq \left\|\frac{1}{2} R_1 - \hat{\pi}\hat{R}_{\textrm{perm}:1}^{(1)}\right\|_{\infty} + \left\|\frac{1}{2} R_2 - \left(1 - \hat{\pi}\right)\hat{R}_{\textrm{perm}:1}^{(1)}\right\|_{\infty}\\
&\leq \frac{1}{2} \left\|R_1 - \hat{R}_{\textrm{perm}:1}^{(1)}\right\|_{\infty} + \frac{1}{2}\left\|R_2 - \hat{R}_{\textrm{perm}:1}^{(2)}\right\|_{\infty}\notag\\
&+ \left|\hat{\pi} - \frac{1}{2}\right|\left(\left\|\hat{R}_{\textrm{perm}:1}^{(1)}\right\|_{\infty} + \left\|\hat{R}_{\textrm{perm}:1}^{(2)}\right\|_{\infty}\right)\notag
\end{align}
If we consider $\hat{R}_{\textrm{perm}:1}^{(1)}$, we see that it is essentially a sample correlation matrix (using only the $\hat{\pi}n$ observations that were fixed in class $1$ by $\Pi$ for the inner product). We can make a similar observation for $\hat{R}_{\textrm{perm}:1}^{(2)}$. Now, for $n$ sufficiently large, because $|\frac{1}{2} - \hat{\pi}|$ is small, we can make $\hat{\pi}n$ and $\left(1-\hat{\pi}\right)n$ as large as we would like. Thus, by a combination of Lemma~\ref{lemma3} and Lemma~\ref{lemma2}, we have that 
\[
\left\|R_l - \hat{R}_{\textrm{perm}:1}^{(l)}\right\|_{\infty} < \delta/3
\]
with probability greater than $1-\epsilon/3$. Furthermore, using the same Lemmas we get
\[
\left\|\hat{R}_{\textrm{perm}:1}^{(1)}\right\|_{\infty} + \left\|\hat{R}_{\textrm{perm}:1}^{(2)}\right\|_{\infty} \leq 4
\]
with probability at least $1-\epsilon/3$ (this bound can easily be made tighter, and if we were to standardize within permutation this bound is trivial). Plugging this in with the assumed bound on $\left|\hat{\pi} - \frac{1}{2}\right|$ completes the proof.
\end{proof}

Now, we use this Lemma (along with some of our previous Lemmas) to show that for any fixed $t>0$ if our variables are subgaussian with some other minor conditions, then for $n\rightarrow \infty$ and $\log p/n\rightarrow 0$ with probability approaching $1$, none of our permuted statistics will be larger than $t$, or in other words our estimated FDR will converge to $0$.

\begin{proof}[{\bf Proof of Theorem~6.2}]
First we choose an arbitrary $2\epsilon_p >0$ and $t>0$. If we consider Lemma~$7.1$, we see that if we find some $\delta >0$ satisfying
\begin{equation}\label{eq:need}
\left\|R_{\textrm{perm}} - \hat{R}_{\textrm{perm}:m}\right\|_{\infty}
\end{equation}
for $m=1,2$ with probability at least $1-\epsilon_p$ and
\begin{equation}\label{eq:bd}
\frac{2\delta}{1-(\rho_{\max} + \delta)^2} \leq t
\end{equation}
then we have satisfied our claim. Because, $\rho_{\max} < 1$, there exists some $\delta >0$ satisfying \eqref{eq:bd}. Now, we first note that, for $n$ sufficiently large, standard concentration inequalities give us that
\[
\left|\hat{\pi} - \frac{1}{2}\right| \leq \delta/12
\]
with probability greater than $1 - \epsilon_p$. If we apply Lemma~$7.5$ with this bound on $\hat{\pi}$ and combine the probabilities with the union bound, we get that for $n$ sufficiently large \eqref{eq:need} is violated with at most probability $2\epsilon_p$. This completes our proof.
\end{proof}

\section{Acknowledgments}
We would like to thank Jonathan Taylor and Trevor Hastie for their helpful comments and insight.

\bibliographystyle{abbrvnat}
\bibliography{/home/nsimon/texlib/simon}

\end{document}